\DeclareMathOperator*{\argmin}{arg\,min}
\newtheorem{theorem}{Theorem}
\newtheorem{lemma}[theorem]{Lemma}
\newcommand{\new}{\textcolor{black}}
\newcommand{\zzz}{\textcolor{black}}
\title{Unbiased Measurement of Feature Importance in Tree-Based Methods}
\author{Zhengze Zhou\footnote{PhD Candidate, Department of Statistical Science, Cornell University, zz433@cornell.edu.},  Giles Hooker\footnote{Associate Professor, Department of Statistical Science, Cornell University, gjh27@cornell.edu.} \\
}
\begin{document}
\maketitle

\begin{abstract}

We propose a modification that corrects for split-improvement variable importance measures in Random Forests and other tree-based methods. These methods have been shown to be biased towards increasing the importance of features with more potential splits. We show that by appropriately incorporating split-improvement as measured on out of sample data, this bias can be corrected yielding better summaries and screening tools.
\end{abstract}

\section{Introduction}

This paper examines split-improvement feature importance scores for tree-based methods.  Starting with Classification and Regression Trees \citep[CART;][]{breiman1984classification} and C4.5 \citep{quinlan2014c4}, decision trees have been a workhorse of general machine learning, particularly within ensemble methods such as Random Forests \citep[RF;][]{breiman2001random} and Gradient Boosting Trees \citep{friedman2001greedy}. They enjoy the benefits of computational speed, few tuning parameters and natural ways of handling missing values. Recent statistical theory for ensemble methods \citep[e.g.][]{denil2014narrowing,scornet2015consistency,mentch2016quantifying,wager2018estimation,zhou2018boulevard} has provided theoretical guarantees and allowed formal statistical inference. \zzz{Variants of these models have also been proposed such as Bernoulli Random Forests \citep{yisen2016bernoulli,wang2017novel} and Random Survival Forests \citep{ishwaran2008random}.} For all these reasons, tree-based methods have seen broad applications including in protein interaction models \citep{meyer2017interactome} in product suggestions on Amazon \citep{sorokina2016amazon} and in financial risk management \citep{khaidem2016predicting}.

However, in common with other machine learning models, large ensembles of trees act as ``black boxes'', providing predictions but little insight as to how they were arrived at. There has thus been considerable interest in providing tools either to explain the broad patterns that are modeled by these methods, or to provide justifications for particular predictions. This paper examines variable or feature\footnote{We use ``feature'', ``variable'' and ``covariate'' interchangeably here to indicate individual measurements that act as inputs to a machine learning model from which a prediction is made.} importance scores that provide global summaries of how influential a particular input dimension is in the models' predictions. These have been among the earliest diagnostic tools for machine learning and have been put to practical use  as screening tools, see for example \citet{diaz2006gene} and \citet{menze2009comparison}. Thus, it is crucial that these feature importance measures reliably produce well-understood summaries.

Feature importance scores for tree-based models can be broadly split into two categories. Permutation methods rely on measuring the change in value or accuracy when the values of one feature are replaced by uninformative noise, often generated by a permutation. These have the advantage of being applicable to any function, but have been critiqued by \citet{hooker2007generalized,strobl2008conditional,hooker2019please} for forcing the model to extrapolate. By contrast, in this paper we study the alternative split-improvement scores \zzz{(also known as Gini importance, or mean decrease impurity)} that are specific to tree-based methods. These naturally aggregate the improvement associated with each note split and can be readily recorded within the tree building process \citep{breiman1984classification,friedman2001greedy}. In Python, split-improvement is the default implementation for almost every tree-based model, including \textbf{RandomForestClassifier}, \textbf{RandomForestRegressor}, \textbf{GradientBoostingClassifier} and \textbf{GradientBoostingRegressor} from \textbf{scikit-learn} \citep{pedregosa2011scikit}.

Despite their common use, split-improvement measures are biased towards features that exhibit more potential splits and in particular towards continuous features or features with large numbers of categories. \zzz{This weakness was already noticed in \citet{breiman1984classification} and \citet{strobl2007bias} conducted thorough experiments followed by more discussions in \citet{boulesteix2011random} and \citet{nicodemus2011letter}\footnote{See \underline{https://explained.ai/rf-importance/} for a popular demonstration of this.}.} While this may not be concerning when all covariates are similarly configured, in practice it is common to have a combination of categorical and continuous variables in which emphasizing more complex features may mislead any subsequent analysis. For example, gender will be a very important binary predictor in applications related to medical treatment; whether the user is a paid subscriber is also central to some tasks such as in Amazon and Netflix. But each of these may be rated as less relevant to age \zzz{which is a more complex feature} in either case. \zzz{In the task of ranking single nucleotide polymorphisms with respect to their ability to predict a target phenotype, researchers may overlook rare variants as common ones are systematically favoured by the split-improvement measurement. \citep{boulesteix2011random}}.

We offer an intuitive rationale for this phenomenon and design a simple fix to solve the bias problem. The observed bias is similar to overfitting in training machine learning models, where we should not build the model and evaluate relevant performance using the same set of data. To fix this, split-improvement calculated from a separate test set is taken into consideration. We further demonstrate that this new measurement is unbiased in the sense that features with no predictive power for the target variable will receive an importance score of zero in expectation. \new{These measures can be very readily implemented in tree-based software packages.} We believe the proposed measurement provides a more sensible means for evaluating feature importance in practice.

In the following, we introduce some background and notation for tree-based methods in Section \ref{tree}. In Section \ref{FI}, split-improvement is described in detail and its bias and limitations are presented.  The proposed unbiased measurement is introduced in Section \ref{USI}. Section \ref{real} applies our idea to a simulated example and three real world data sets. We conclude with some discussions and future directions in Section \ref{dis}. Proofs and some additional simulation results are collected in Appendix \ref{proof} and \ref{sim} respectively. 

\section{Tree-Based Methods}\label{tree}

In this section, we provide a brief introduction and mathematical formulation of tree-based models \new{that will also serve to introduce our notation}. We refer readers to relevant chapters in \citet{friedman2001elements} for a more detailed presentation. 

\subsection{Tree Building Process}\label{cart}

Decision trees are a non-parametric machine learning tool for constructing prediction models from data. They are obtained by recursively partitioning feature space by axis-aligned splits and fitting a simple prediction function, usually constant, within each partition. The result of this partitioning procedure is represented as a binary tree. Popular tree building algorithms, such as CART and C4.5, may differ in how they choose splits or deal with categorical features. Our introduction in this section mainly reflects how decision trees are implemented in \textbf{scikit-learn}.

Suppose our data consists of $p$ inputs and a response, denoted by $z_i = (x_i, y_i)$ for $i = 1, 2, \ldots, n$, with $x_i = (x_{i1}, x_{i2}, \ldots, x_{ip})$. For simplicity we assume our inputs are continuous\footnote{Libraries in different programming languages differ on how to handle categorical inputs. \textbf{rpart} and \textbf{randomForest} libraries in \textbf{R} search over every possible subsets when dealing with categorical features. However, tree-based models in \textbf{scikit-learn} do not support categorical inputs directly. Manually transformation is required to convert categorical features to integer-valued ones, such as using dummy variables, or treated as ordinal when applicable.}. Labels can be either continuous (regression trees) or categorical (classification trees). Let the data at a node $m$ represented by $Q$. Consider a splitting variable $j$ and a splitting point $s$, which results in two child nodes: 
$$
Q_{l} = \{(x, y)|x_j \leq s \}
$$
$$
Q_{r} = \{(x, y)|x_j > s \}.
$$
The impurity at node $m$ is computed by a function $H$, which acts as a measure for goodness-of-fit \new{and is invariant to sample size}. Our loss function for split $\theta = (j, s)$ is defined as the weighted average of the impurity at two child nodes:
$$
L(Q, \theta) = \frac{n_{l}}{n_m}H(Q_{l}) + \frac{n_{r}}{n_m}H(Q_{r}),
$$
where $n_m, n_l, n_r$ are the number of training examples falling into node $m, l, r$ respectively.
The best split is chosen by minimizing the above loss function:
\begin{equation}\label{best}
    \theta^* = \argmin_{\theta}L(Q, \theta).
\end{equation}
The tree is built by recursively splitting child nodes until some stopping criterion is met. For example, we may want to limit tree depth, or keep the number of training samples above some threshold within each node.

For regression trees, $H$ is usually chosen to be mean squared error, using average values as predictions within each node. At node $m$ with $n_m$ observations, $H(m)$ is defined as:
$$
\bar{y}_m = \frac{1}{n_m}\sum_{x_i \in m}y_i,
$$
$$
H(m) = \frac{1}{n_m}\sum_{x_i \in m}(y_i - \bar{y}_m)^2.
$$
Mean absolute error can also be used depending on specific application.

In classification, there are several different choices for the impurity function $H$. Suppose for node $m$, the target $y$ can take values of $1, 2, \ldots, K$, define
$$
p_{mk} = \frac{1}{n_m}\sum_{x_i \in m}\mathbbm{1}(y_i = k)
$$
to be the proportion of class $k$ in node $m$, for $k = 1, 2, \ldots, K$. Common choices are:
\begin{enumerate}
\item Misclassification error:
$$
H(m) = 1 - \max_{1 \leq k \leq K} p_{mk}.
$$
\item Gini index:
$$
H(m) = \sum_{k \neq k'}p_{mk}p_{mk'} = 1 - \sum_{k = 1}^K p_{mk}^2.
$$
\item Cross-entropy or deviance:
$$
H(m) = -\sum_{k=1}^K p_{mk}\log p_{mk}.
$$
\end{enumerate}

This paper will focus on mean squared error for regression and Gini index for classification. 

\subsection{Random Forests and Gradient Boosting Trees}

Though intuitive and interpretable, there are two major drawbacks associated with a single decision tree: they suffer from high variance and in some situations they are too simple to capture complex signals in the data. Bagging \citep{breiman1996bagging} and boosting \citep{friedman2001greedy} are two popular techniques used to improve the performance of decision trees.

Suppose we use a decision tree as a base learner $t(x; z_1, z_2, \ldots, z_n)$, where $x$ is the input for prediction and $z_1, z_2, \ldots, z_n$ are training examples as before. Bagging aims to stabilize the base learner $t$ by resampling the training data. In particular, the bagged estimator can be expressed as:
$$
\hat{t}(x) = \frac{1}{B}\sum_{b=1}^Bt(x; z^*_{b1}, z^*_{b2}, \ldots, z^*_{bn})
$$
where $z^*_{bi}$ are drawn independently with replacement from the original data (bootstrap sample), and $B$ is the total number of base learners. Each tree is constructed using a different bootstrap sample from the original data. Thus approximately one-third of the cases are left out and not used in the construction of each base learner. We call these \textit{out-of-bag} samples.

Random Forests \citep{breiman2001random} are a popular extension of bagging with an additional randomness injected. At each step when searching for the best split, only $p_0$ features are randomly selected from all $p$ possible features and the best split $\theta^*$ must be chosen from this subset. When $p_0 = p$, this reduces to bagging. Mathematically, the prediction is written as 
$$
\hat{t}^{RF}(x) = \frac{1}{B}\sum_{b=1}^Bt(x; \xi_b, z^*_{b1}, z^*_{b2}, \ldots, z^*_{bn})
$$
with $\xi_b \stackrel{\mbox{iid}}{\sim} \Xi$ denoting the additional randomness for selecting from a random subset of available features.

Boosting is another widely used technique by data scientists to achieve state-of-the-art results on many machine learning challenges \citep{chen2016xgboost}. Instead of building trees in parallel as in bagging, it does this sequentially, allowing the current base learner to correct for any previous bias. In \citet{ghosal2018boosting}, the authors also consider boosting RF to reduce bias. We will skip over some technical details on boosting and restrict our discussion of feature importance in the context of decision trees and RF. Note that as long as tree-based models combine base learners in an additive fashion, their feature importance measures are naturally calculated by (weighted) average across those of individual trees.

\section{Measurement of Feature Importance}\label{FI}

Almost every feature importance measures used in tree-based models belong to two classes: split-improvement or permutation importance. Though our focus will be on split-improvement, permutation importance is introduced first for completeness. 

\subsection{Permutation Importance}

Arguably permutation might be the most popular method for assessing feature importance in the machine learning community. Intuitively, if we break the link between a variable $X_j$ and $y$, the prediction error increases then variable $j$ can be considered as important. 

Formally, we view the training set as a matrix $X$ of size $n \times p$, where each row $x_i$ is one observation. Let $X^{\pi, j}$ be a matrix achieved by permuting the $j^{th}$ column according to some mechanism $\pi$. If we use $l(y_i, f(x_i))$ as the loss incurred when predicting $f(x_i)$ for $y_i$, then the importance of $j^{th}$ feature is defined as:
\begin{equation} \label{p}
\text{VI}^{\pi}_j = \sum_{i=1}^n l(y_i, f(x_i^{\pi, j}) - l(y_i, f(x_i)))
\end{equation}
the increase in prediction error when the $j^{th}$ feature is permuted. Variations include choosing different permutation mechanism $\pi$ or evaluating Equation (\ref{p}) on a separate test set. In Random Forests, \citet{breiman2001random} suggest to only permute the values of the $j^{th}$ variable in the \textit{out-of-bag} samples for each tree, and final importance for the forest is given by averaging across all trees. 

There is a small literature analyzing permutation importance in the context of RF. \citet{ishwaran2007variable} studied paired importance. 
\citet{hooker2007generalized, strobl2008conditional, hooker2019please} advocated against permuting features by arguing it emphasizes behavior in regions where there is very little data. More recently, \citet{gregorutti2017correlation} conducted a theoretical analysis of permutation importance measure for an additive regression model. 

\subsection{Split-Improvement}

While permutation importance measures can generically be applied to any prediction function, 
split-improvement is unique to tree-based methods, and can be calculated directly from the training process. Every time a node is split on variable $j$,  the combined impurity for the two descendent nodes is less than the parent node. Adding up the weighted impurity decreases for each split in a tree and averaging over all trees in the forest yields an importance score for each feature.

Following our notation in Section \ref{cart}, the impurity function $H$ is either mean squared error for regression or Gini index for classification. The best split at node $m$ is given by $\theta^*_m$ which splits at $j^{th}$ variable and results in two child nodes denoted as $l$ and $r$. Then the decrease in impurity for split $\theta^*$ is defined as:
\begin{equation} \label{di}
\Delta(\theta^*_m) =  \omega_mH(m) - (\omega_lH(l) + \omega_rH(r)),
\end{equation}
where $\omega$ is the proportion of observations falling into each node, i.e.,  $\omega_m = \frac{n_m}{n}$, $\omega_l = \frac{n_l}{n}$ and $\omega_r = \frac{n_r}{n}$. Then, to get the importance for $j^{th}$ feature in a single tree, we add up all $\Delta(\theta^*_m)$ where the split is at the $j^{th}$ variable:
\begin{equation} \label{t}
\text{VI}^{\text{T}}_j = \sum_{m, j \in \theta^*_m}\Delta(\theta^*_m).
\end{equation}
Here the sum is taken over all non-terminal nodes of the tree, and we use the notation $j \in \theta^*_m$ to denote that the split is based on the $j^{th}$ feature. 

The notion of split-improvement for decision trees can be easily extended to Random Forests by taking the average across all trees. Suppose there are $B$ base learners in the forest, we could naturally define
\begin{equation} \label{si}
\text{VI}^{\text{RF}}_j = \frac{1}{B}\sum_{b = 1}^B\text{VI}^{\text{T(b)}}_j = \frac{1}{B}\sum_{b = 1}^B\sum_{m, j \in \theta^*_m}\Delta_b(\theta^*_m).
\end{equation}

\subsection{Bias in Split-Improvement}

\citet{strobl2007bias} pointed out that the split-improvement measure defined above is biased towards increasing the importance of continuous features or categorical features with many categories. This is because of the increased flexibility afforded by a larger number of potential split points. We conducted a similar simulation to further demonstrate this phenomenon. All our experiments are based on Random Forests which gives more stable results than a single tree.

We generate a simulated dataset so that $X_1 \sim N(0, 1)$ is continuous, and $X_2, X_3, X_4, X_5$ are categorically distributed with $2, 4, 10, 20$ categories respectively. The probabilities are equal across categories within each feature. In particular, $X_2$ is Bernoulli distribution with $p=0.5$. In classification setting, the response $y$ is also generated as a Bernoulli distribution with $p=0.5$, but independent of all the $X$'s. For regression, $y$ is independently generated as $N(0, 1)$. We repeat the simulation 100 times, each time generating $n=1000$ data points and fitting a Random Forest model\footnote{Our experiments are implemented using \textbf{scikit-learn}. Unless otherwise noted, default parameters are used.} using the data set. Here categorical features are encoded into dummy variables, and we sum up importance scores for corresponding dummy variables as final measurement for a specific categorical feature. In Appendix \ref{sim}, we also provide simulation results when treating those categorical features as (ordered) discrete variables. 

\begin{figure}
    \centering
    \begin{subfigure}[b]{0.45\textwidth}
        \centering
        \includegraphics[width=\textwidth]{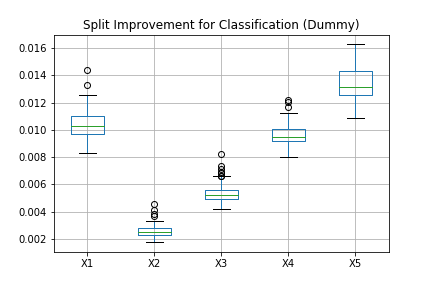}
        \caption{Classification}
        \label{fig:si_cls}
    \end{subfigure}
    \hfill
    \begin{subfigure}[b]{0.45\textwidth}
        \centering
        \includegraphics[width=\textwidth]{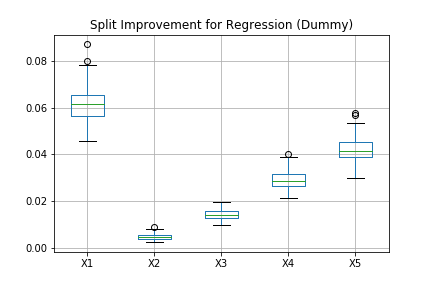}
        \caption{Regression}
        \label{fig:si_regr}
    \end{subfigure}
    \hfill
    
    \caption{Split-improvement measures on five predictors. Box plot is based on 100 repetitions. 100 trees are built in the forest and maximum depth of each tree is set to 5.}
    \label{fig:si}
    
\end{figure}

Box plots are shown in Figure \ref{fig:si_cls} and \ref{fig:si_regr} for classification and regression respectively. The continuous feature $X_1$ is frequently given the largest importance score in regression setting, and among the four categorical features, those with more categories receive larger importance scores. Similar phenomenon is observed in classification as well, while $X_5$ appears to be artificially more important than $X_1$. Also note that all five features get positive importance scores, though we know that they have no predictive power for the target value $y$.

We now explore how strong a signal is needed in order for the split-improvement measures to discover important predictors. We generate $X_1, X_2, \ldots, X_5$ as before, but in regression settings set $y = \rho X_2 + \epsilon $ where $\epsilon \sim N(0, 1)$. We choose $\rho$ to range from 0 to 1 at step size 0.1 to encode different levels of signal. For classification experiments, we first make $y = X_2$ and then flip each element of $y$ according to $P(U > \frac{1 + \rho}{2})$ where $U$ is Uniform$[0,1]$. This way, the correlation between $X_2$ and $y$ will be approximately $\rho$. We report the average ranking of all five variables across 100 repetitions for each $\rho$. The results are shown in Figure \ref{fig:rank}.

We see that $\rho$ needs to be larger than 0.2 to actually find $X_2$ is the most important predictor in our classification setting, while in regression this value increases to 0.6. And we also observe that a clear order exists for the remaining (all unimportant) four features. 

\begin{figure}
    \centering
    \begin{subfigure}[b]{0.45\textwidth}
        \centering
        \includegraphics[width=\textwidth]{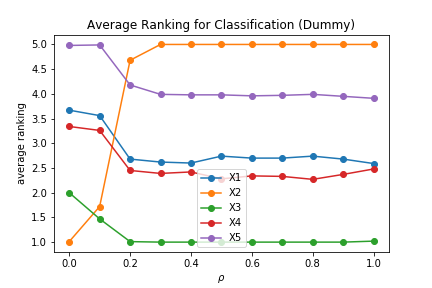}
        \caption{Classification}
        \label{fig:c_rank}
    \end{subfigure}
    \hfill
    \begin{subfigure}[b]{0.45\textwidth}
        \centering
        \includegraphics[width=\textwidth]{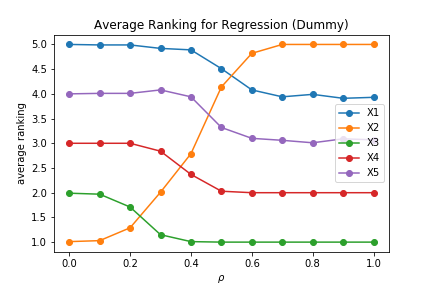}
        \caption{Regression}
        \label{fig:r_rank}
    \end{subfigure}
    \hfill
    
    \caption{Average feature importance ranking across different signal strengths over 100 repetitions. 100 trees are built in the forest and maximum depth of each tree is set to 5.}
    \label{fig:rank}
    
\end{figure}

This bias phenomenon could make many statistical analyses based on split-improvement invalid. For example, gender is a very common and powerful binary predictor in many applications, but feature screening based on split-improvement might think it is not important compared to age. In the next section, we explain intuitively why this bias is observed, and provide a simple but effective adjustment. 

\subsection{Related Work}

\zzz{Before presenting our algorithm, we review some related work aiming at correcting the bias in split-improvement. Most of the methods fall into two major categories: they either propose new tree building algorithms by redesigning split selection rules, or perform as a post hoc approach to debias importance measurement.}  

\zzz{There has been a line of work on designing trees which do not have such bias as observed in classical algorithms such as CART and C4.5. For example, Unbiased and Efficient Statistical Tree \citep[QUEST;][]{loh1997split} removed the bias by using F-tests on ordered variables and contingency table chi-squared tests on categorical variables. Based on QUEST, CRUISE \citep{kim2001classification} and GUIDE \citep{loh2009improving} were developed. We refer readers to \citet{loh2014fifty} for a detailed discussion in this aspect. In \citet{strobl2007bias}, the authors resorted to a different algorithm called cforest \citep{hothorn2010party}, which was based on a conditional inference framework \citep{hothorn2006unbiased}. They also implemented a stopping criteria based on multiple test procedures.} 

\zzz{\citet{sandri2008bias} expressed split-improvement as two components: a heterogeneity reduction and a positive bias. Then the original dataset $( \mathbf{X}, Y)$ is augmented with pseudo data $\mathbf{Z}$ which is uninformative but shares the structure of $\mathbf{X}$ \citep[this idea of generating pseudo data is later formulated in a general framework termed  ``knockoffs'';][]{barber2015controlling}. The positive bias term is estimated by utilizing the pseudo variables $\mathbf{Z}$ and subtracted to get a debiased estimate. \citet{nembrini2018revival} later modified this approach to shorten computation time and provided empirical importance testing procedures. Most recently, \citet{li2019debiased} derived a tight non-asymptotic bound on the expected bias of noisy features and provided a new debiased importance measure. However, this approach only alleviates the issue and still yields biased results.}

\zzz{Our approach works as a post hoc analysis, where the importance scores are calculated after a model is built. Compared to previous methods, it enjoys several advantages: 
\begin{itemize}
    \item It can be easily incorporated into any existing framework for tree-based methods, such as Python or R.
    \item It does not require generating additional pseudo data or computational repetitions as in \citet{sandri2008bias, nembrini2018revival}.
    \item Compared to \citet{li2019debiased} which does not have a theoretical guarantee, our method is proved to be unbiased for noisy features.
\end{itemize}}

\section{Unbiased Split-Improvement}\label{USI}

When it comes to evaluating the performance of machine learning models, we generally use a separate test set to calculate generalization accuracy. The training error is usually smaller than the test error as the algorithm is likely to "overfit" on the training data. This is exactly why we observe the bias with split-improvement. Each split will favor continuous features or those features with more categories, as they will have more flexibility to fit the training data. The vanilla version of split-improvement is just like using train error for evaluating model performance. 

Below we propose methods to remedy this bias phenomenon by utilizing a separate test set, and prove that for features with no predictive power, we're able to get an importance score of 0 in expectation for both classification and regressions settings. Our method is entirely based on the original framework of RF, requires barely no additional computational efforts, and can be easily integrated into any existing software libraries.

The main ingredient of the proposed method is to calculate the impurity function $H$ using additional information provided from test data. In the context of RF, we can simply take \textit{out-of-bag} samples for each individual tree. \new{Our experiments below are based on this strategy. In the context of the honest trees proposed in \citet{wager2018estimation} that divide samples into a partition used to determine tree structures and a partition used to obtain leaf values, the latter could be used as our test data below.  In boosting, it is common not to sample, but to keep a test set separate to determine a stopping time.} Since the choice of impurity function $H$ is different for classification and regression, in what follows we will treat them separately. 

Figure \ref{fig:test} and \ref{fig:urank} shows the results on previous classification and regression tasks when our unbiased method is applied\footnote{Relevant codes can be found at \href{https://github.com/ZhengzeZhou/unbiased-feature-importance}{https://github.com/ZhengzeZhou/unbiased-feature-importance}.}. Feature scores for all variables are spread around $0$, though continuous features and categorical features with more categories tend to exhibit more variability. In the case where there is correlation between $X_2$ and $y$, even for the smallest $\rho = 0.1$, we can still find the most informative predictor, whereas there are no clear order for the remaining noise features.

\begin{figure}
    \centering
    \begin{subfigure}[b]{0.45\textwidth}
        \centering
        \includegraphics[width=\textwidth]{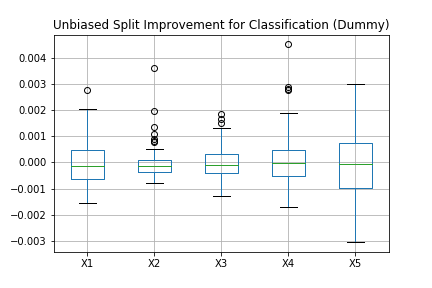}
        \caption{Classification}
        \label{fig:test_cls}
    \end{subfigure}
    \hfill
    \begin{subfigure}[b]{0.45\textwidth}
        \centering
        \includegraphics[width=\textwidth]{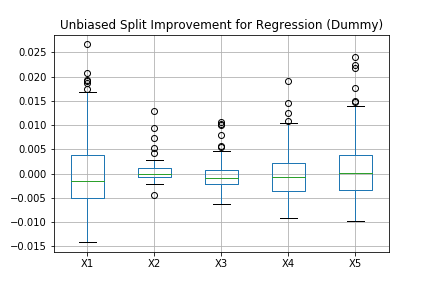}
        \caption{Regression}
        \label{fig:test_regr}
    \end{subfigure}
    \hfill
    
    \caption{Unbiased split-improvement. Box plot is based on 100 repetitions. 100 trees are built in the forest and maximum depth of each tree is set to 5. Each tree is trained using bootstrap samples and \textit{out-of-bag} samples are used as test set.}
    \label{fig:test}
    
\end{figure}

\begin{figure}
    \centering
    \begin{subfigure}[b]{0.45\textwidth}
        \centering
        \includegraphics[width=\textwidth]{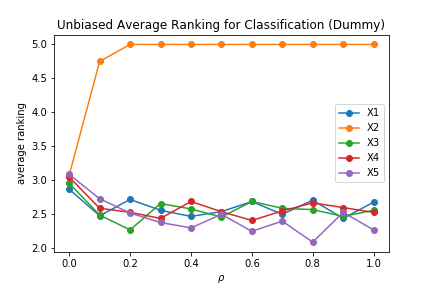}
        \caption{Classification}
        \label{fig:uc_rank}
    \end{subfigure}
    \hfill
    \begin{subfigure}[b]{0.45\textwidth}
        \centering
        \includegraphics[width=\textwidth]{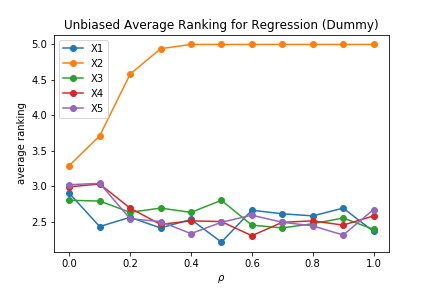}
        \caption{Regression}
        \label{fig:ur_rank}
    \end{subfigure}
    \hfill
    
    \caption{Unbiased feature importance ranking across different signal strengths averaged over 100 repetitions. 100 trees are built in the forest and maximum depth of each tree is set to 5. Each tree is trained using bootstrap samples and \textit{out-of-bag} samples are used as test set.}
    \label{fig:urank}
    
\end{figure}

\subsection{Classification}

Consider a root node $m$ and two child nodes, denoted by $l$ and $r$ respectively. The best split $\theta^*_m = (j, s)$ was chosen by Formula (\ref{best}) and Gini index is used as impurity function $H$. 

For simplicity, we focus on binary classification. Let $p$ denote class proportion within each node. For example, $p_{r, 2}$ denotes the proportion of class 2 in the right child node. Hence the Gini index for each node can be written as:
$$
H(m) = 1 - p_{m,1}^2 - p_{m,2}^2,
$$
$$
H(l) = 1 - p_{l,1}^2 - p_{l,2}^2,
$$
$$
H(r) = 1 - p_{r,1}^2 - p_{r,2}^2.
$$
The split-improvement for a split at $j^{th}$ feature when evaluated using only the training data is written as in Equation (\ref{di}). This value is always positive no matter which feature is chosen and where the split is, which is exactly why a selection bias will lead to overestimate of feature importance.

If instead, we have a separate test set available, the predictive impurity function for each node is modified to be:
\begin{equation}
  \label{pi}
  \begin{aligned}
    H'(m) &= 1 - p_{m,1}p'_{m,1} - p_{m,2}p'_{m,2}, \\        
    H'(l) &= 1 - p_{l,1}p'_{l,1} - p_{l,2}p'_{l,2}, \\
    H'(r) &= 1 - p_{r,1}p'_{r,1} - p_{r,2}p'_{r,2},
  \end{aligned}
\end{equation}
where $p'$ is class proportion evaluating on the test data.
And similarly, 
\begin{equation}
  \label{delta_c}
  \begin{aligned}
\Delta'(\theta^*_m) & = \omega_mH'(m) - (\omega_lH'(l) + \omega_rH'(r)) \\
& = \omega_l(H'(m) - H'(l)) + \omega_r(H'(m) - H'(r)).
  \end{aligned}
\end{equation}
\new{Using these definitions, we first demonstrate that an individual split is unbiassed in the sense that if $y$ has no bivariate relationship with $X_j$, $\Delta'(\theta^*_m)$ will have expectation 0.}
\begin{lemma}\label{l_c}
In classification settings, for a given feature $X_j$, if $y$ is marginally independent of $X_j$ within the region defined by node $m$, then 
$$
E\Delta'(\theta^*_m) = 0
$$
when splitting at the $j^{th}$ feature.
\end{lemma}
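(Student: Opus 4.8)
The plan is to compute the expectation of $\Delta'(\theta^*_m)$ by first conditioning on the training sample, which fixes the chosen split $\theta^*_m = (j,s)$, the region defining node $m$, and all training proportions $p$, and then averaging over the test labels. I would work from the three-term form $\omega_m H'(m) - \omega_l H'(l) - \omega_r H'(r)$ (equivalent to (\ref{delta_c})), which is most convenient for exhibiting cancellation. Throughout, the training weights $\omega_m, \omega_l, \omega_r$ and the training proportions $p_{m,k}, p_{l,k}, p_{r,k}$ are treated as constants, and only the test proportions $p'$ carry randomness.

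The key step is to evaluate $E[p'_{m,k}]$, $E[p'_{l,k}]$, and $E[p'_{r,k}]$ for $k \in \{1,2\}$. Let $\pi_k = P(y = k \mid x \in \mathrm{region}(m))$ denote the population class-$k$ probability within node $m$. Because $y$ is independent of $X_j$ inside this region, conditioning a test point on lying in the left child $\{x_j \le s\}$ or right child $\{x_j > s\}$ does not alter its label distribution, so every test point falling into $m$, $l$, or $r$ is an independent draw with $P(y=k)=\pi_k$. Conditioning additionally on the test covariates (which fix the number of test points in each node), each $p'_{\cdot,k}$ is an average of i.i.d. $\mathrm{Bernoulli}(\pi_k)$ indicators and therefore has conditional expectation exactly $\pi_k$; by the tower property $E[p'_{m,k}] = E[p'_{l,k}] = E[p'_{r,k}] = \pi_k$. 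Substituting into (\ref{pi}) gives $E[H'(m)] = 1 - \sum_k p_{m,k}\pi_k$, and likewise for $l$ and $r$ with $p_{l,k}$, $p_{r,k}$ in place of $p_{m,k}$.

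Plugging these into $E[\Delta'(\theta^*_m)]$ and collecting terms, the constant contribution is $(\omega_m - \omega_l - \omega_r)\cdot 1 = 0$, since every training observation in $m$ passes to exactly one child, i.e. $\omega_m = \omega_l + \omega_r$. For each class $k$ the remaining contribution is $-\pi_k(\omega_m p_{m,k} - \omega_l p_{l,k} - \omega_r p_{r,k})$. Writing $\omega_m p_{m,k} = n_{m,k}/n$, where $n_{m,k}$ is the number of training points of class $k$ in node $m$ (and analogously for $l$ and $r$), this equals $-\pi_k(n_{m,k} - n_{l,k} - n_{r,k})/n = 0$ because the class counts are additive across the split. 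Summing over $k$ yields $E[\Delta'(\theta^*_m)] = 0$. It is worth noting in passing that this cancellation is exactly what fails for the vanilla measure (\ref{di}), where $H$ uses $p_{l,k}^2$ in place of $p_{l,k}p'_{l,k}$ and the variance term $E[p_{l,k}^2] - \pi_k^2 > 0$ survives; the independence of the test sample is what eliminates it.

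The main obstacle is the careful conditioning rather than the algebra. I must ensure the test sample is independent of the training sample, so that fixing the training-determined split and region does not bias the test labels, and I must state precisely that the hypothesis is the \emph{conditional} statement that $y \perp X_j$ holds within $\mathrm{region}(m)$, which is what makes splitting on $X_j$ label-neutral. A minor technical point to dispatch is the possibility of a child containing no test points, which renders $p'$ a $0/0$ ratio; this is handled either by conditioning on the event that each node contains at least one test point or by the convention that empty nodes contribute zero, neither of which affects the vanishing of the expectation.
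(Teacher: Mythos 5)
Your proof is correct and takes essentially the same route as the paper's: condition on the training sample so that the split and all training quantities are fixed, use the within-node independence of $y$ and $X_j$ to conclude that the expected test proportions agree across parent and children ($E[p'_{m,k}] = E[p'_{l,k}] = E[p'_{r,k}] = \pi_k$, the paper's $p'_1$), and finish with the deterministic additivity $\omega_m p_{m,k} = \omega_l p_{l,k} + \omega_r p_{r,k}$. The differences are organizational rather than conceptual --- you keep the three-term form of $\Delta'$ and a general class index $k$ where the paper specializes to binary proportions and factors out $(1 - 2p'_1)$ --- and your explicit treatment of the empty-test-node ($0/0$) edge case is a point of care the paper's proof omits.
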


\begin{proof}
See Appendix \ref{proof}.
\end{proof}
Similar to Equation (\ref{t}), split-improvement of $x_j$ in a decision tree is defined as:
\begin{equation} \label{ut}
\text{VI}^{\text{T,C}}_j = \sum_{m, j \in \theta^*_m}\Delta'(\theta^*_m).
\end{equation}
\new{We can now apply Lemma \ref{l_c} to provide a global result so long as $X_j$ is always irrelevant to $y$.}
\begin{theorem}
In classification settings, for a given feature $X_j$, if $y$ is independent of $X_j$ in every hyper-rectangle subset of the feature space, then we always have
$$
E\text{VI}^{\text{T,C}}_j = 0.
$$
\end{theorem}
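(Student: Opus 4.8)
The plan is to leverage Lemma \ref{l_c} as the single-split building block and sum over all nodes of the tree. Recall from Equation (\ref{ut}) that the tree-level importance is
$$
\text{VI}^{\text{T,C}}_j = \sum_{m,\, j \in \theta^*_m} \Delta'(\theta^*_m),
$$
a finite sum of the per-split contributions appearing in Lemma \ref{l_c}. Since expectation is linear, it suffices to argue that $E\Delta'(\theta^*_m) = 0$ for every node $m$ at which the tree happens to split on $X_j$. The strengthened hypothesis of the theorem --- that $y$ is independent of $X_j$ in \emph{every} hyper-rectangle subset of the feature space, rather than merely at the single node of Lemma \ref{l_c} --- is exactly what is needed to make the lemma applicable simultaneously at all such nodes, since every node of an axis-aligned tree corresponds to a hyper-rectangle region.

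First I would fix a tree topology and condition on the sequence of splits (i.e.\ on which feature and threshold is chosen at each node), so that the set of nodes splitting on $X_j$ and the defining hyper-rectangles become deterministic. For each such node $m$, its region is a hyper-rectangle, so the global independence hypothesis guarantees that $y$ is (marginally) independent of $X_j$ within that region --- precisely the premise of Lemma \ref{l_c}. Applying the lemma node-by-node gives $E[\Delta'(\theta^*_m) \mid \text{topology}] = 0$ for each node splitting on $j$, and summing over the finitely many such nodes yields $E[\text{VI}^{\text{T,C}}_j \mid \text{topology}] = 0$. Taking expectation over the random topology (tower property) then removes the conditioning and gives $E\,\text{VI}^{\text{T,C}}_j = 0$.

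The step requiring the most care is handling the dependence between the tree structure and the test-set quantities $p'$ that enter $\Delta'(\theta^*_m)$. The cleanest route is to note that the tree topology and the training-set proportions $p$ are functions of the training data alone, while the $p'$ are computed on an independent test set (the \emph{out-of-bag} samples); conditioning on the training data therefore fixes the partition and the weights $\omega_l, \omega_r$ while leaving the test proportions free to average out. One must check that Lemma \ref{l_c}, as applied within each node, already accounts for this by treating the node region as given and taking expectation over the test draws; the global independence assumption ensures the conditional independence of $y$ and $X_j$ survives the restriction to each node's hyper-rectangle regardless of how that node was selected. The main obstacle, then, is not a hard calculation but a careful accounting of what is conditioned on at each stage, so that the per-node application of Lemma \ref{l_c} is valid uniformly across a data-dependent collection of nodes.
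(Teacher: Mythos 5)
Your proposal is correct and follows essentially the same route as the paper, whose entire proof is the one-line observation that the result follows from Lemma \ref{l_c} together with Equation (\ref{ut}) by linearity of expectation. Your additional care in conditioning on the training data (so the data-dependent collection of nodes and the hyper-rectangles become fixed, Lemma \ref{l_c} applies node-by-node, and the tower property finishes) is exactly the accounting the paper leaves implicit.
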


\begin{proof}
The result follows directly from Lemma \ref{l_c} and Equation (\ref{ut}). 
\end{proof}

This unbiasedness result can be easily extended to the case of RF by (\ref{si}), as it's an average across base learners. \new{We note here that our independence condition is designed to account for relationships that appear before accounting for splits on other variables, possibly due to relationships between $X_j$ and other features, and afterwards. It is trivially implied by the independence of $X_j$ with both $y$ and the other features. Our condition may also be stronger than necessary, depending on the tree-building process. We may be able to restrict the set of hyper-rectangles to be examined, but only by analyzing specific tree-building algorithms.}

\subsection{Regression}

In regression, we use mean squared error as the impurity function $H$:
$$
\bar{y}_m = \frac{1}{n_m}\sum_{x_i \in m}y_i,
$$
$$
H(m) = \frac{1}{n_m}\sum_{x_i \in m}(y_i - \bar{y}_m)^2.
$$
If instead the impurity function $H$ is evaluated on a separate test set, we define 
$$
H'(m) = \frac{1}{n'_m}\sum_{i=1}^{n'_m}(y'_{m,i} - \bar{y}_m)^2 
$$
and similarly
$$
\Delta'(\theta^*_m) =  \omega_mH'(m) - (\omega_lH'(l) + \omega_rH'(r)).
$$
Note that here $H'(m)$  measures mean squared error within node $m$ on test data with the fitted value $\bar{y}_m$ from training data. If we just sum up $\Delta'$ as feature importance, it will end up with negative values as $\bar{y}_m$ will overfit the training data and thus make mean squared error much larger deep in the tree. In other words, it \textit{over-corrects} the bias. For this reason, our unbiased split-improvement is defined slightly different from the classification case (\ref{ut}):
\begin{equation} \label{utr}
\text{VI}^{\text{T,R}}_j = \sum_{m, j \in \theta^*_m} (\Delta(\theta^*_m)  + \Delta'(\theta^*_m)).
\end{equation}

Notice that although Equation (\ref{ut}) and (\ref{utr}) are different, they originates from the same idea by correcting bias using test data. Unlike Formula (\ref{pi}) for Gini index, where we could design a predictive impurity function by combining train and test data together, it's hard to come up with a counterpart in regression setting. 

Just as in the classification case, we could show the following unbiasedness results:

\begin{lemma}\label{l_r}
In regression settings, for a given feature $X_j$, if $y$ is marginally independent of $X_j$ within the region defined by node $m$, then 
$$
E(\Delta(\theta^*_m) + \Delta'(\theta^*_m)) = 0
$$
when splitting at the $j^{th}$ feature.
\end{lemma}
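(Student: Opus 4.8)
The plan is to prove the identity conditionally on the training sample and then take an outer expectation. I would condition on all training data, so that the chosen split $\theta^*_m=(j,s)$ and the fitted leaf means $\bar{y}_m,\bar{y}_l,\bar{y}_r$ are fixed and the only remaining randomness sits in the test responses. Let $\mu = E[y\mid m]$ and $\sigma^2 = \mathrm{Var}(y\mid m)$ be the conditional mean and variance of $y$ inside node $m$; by the hypothesised independence of $y$ and $X_j$ within $m$, these same $\mu$ and $\sigma^2$ govern both children $l$ and $r$. First I would rewrite the training-side improvement through the standard between/within sum-of-squares decomposition: using $n_m\bar{y}_m = n_l\bar{y}_l + n_r\bar{y}_r$ one gets $\Delta(\theta^*_m) = \tfrac{1}{n}\bigl(n_l(\bar{y}_l-\bar{y}_m)^2 + n_r(\bar{y}_r-\bar{y}_m)^2\bigr)$, the nonnegative ``between-group'' term that is the source of the bias.

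Next I would evaluate the test term in expectation. For each node, writing $H'(\cdot)$ as an average of $(y'_{\cdot,i}-\bar{y}_\cdot)^2$ over test points and applying the bias--variance split $E[(y'-\bar{y}_\cdot)^2 \mid \text{train}] = \sigma^2 + (\bar{y}_\cdot-\mu)^2$, I obtain $E[H'(\cdot)\mid\text{train}] = \sigma^2 + (\bar{y}_\cdot-\mu)^2$ for each of $m$, $l$, $r$. Since $\omega_m = \omega_l + \omega_r$, the $\sigma^2$ contributions cancel inside $\Delta'$, leaving $E[\Delta'(\theta^*_m)\mid\text{train}] = \tfrac{1}{n}\bigl(n_m(\bar{y}_m-\mu)^2 - n_l(\bar{y}_l-\mu)^2 - n_r(\bar{y}_r-\mu)^2\bigr)$.

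The final step is a purely algebraic cancellation. Setting $a=\bar{y}_l-\mu$, $b=\bar{y}_r-\mu$ and $c=\bar{y}_m-\mu = (n_l a + n_r b)/n_m$, adding the two preceding expressions and expanding the squares collapses everything to $\tfrac{1}{n}\bigl(2 n_m c^2 - 2c(n_l a + n_r b)\bigr) = 0$, using $n_l+n_r=n_m$ and $n_l a + n_r b = n_m c$. Hence $\Delta(\theta^*_m) + E[\Delta'(\theta^*_m)\mid\text{train}] = 0$ holds identically, and taking the outer expectation over the training sample gives the claim. It is worth recording that the cancellation is in fact exact after conditioning on the training data, which is stronger than the stated expectation and explains why the correction neither under- nor over-adjusts in the null case.

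The main obstacle is conceptual rather than computational: the split $\theta^*_m$, hence the threshold $s$ and the induced partition of the test points into $l$ and $r$, is chosen adaptively from the training data, so I must argue carefully that conditioning on the training sample does not distort the distribution of the test responses. The independence hypothesis is precisely what rescues this step, because $y$ is independent of $X_j$ within $m$ and the test sample is independent of the training sample, a test point known to fall into child $l$ still has conditional response mean $\mu$ and variance $\sigma^2$, irrespective of the data-driven choice of $s$. I would additionally flag the degenerate case in which a child receives no test observations, so that $H'$ is undefined; this should either be excluded by assumption or handled by convention, and it does not affect the expectation argument otherwise.
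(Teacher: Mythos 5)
Your proof is correct, and it arrives at the same cancellation as the paper's, but it is organized along a genuinely different route. The paper works unconditionally: it expands $H$ and $H'$ into raw second moments, writes $\Delta'(\theta^*_m)$ as a test sum-of-squares term minus $\Delta(\theta^*_m)$ minus twice the cross term $\omega_m\bar{y}'_m\bar{y}_m - \omega_l\bar{y}'_l\bar{y}_l - \omega_r\bar{y}'_r\bar{y}_r$, and then kills the surviving terms by asserting that expected test second moments and expected test means agree across $m$, $l$, $r$. You instead condition on the training sample, apply the bias--variance identity $E[(y'-\bar{y}_\cdot)^2\mid\text{train}] = \sigma^2 + (\bar{y}_\cdot-\mu)^2$ node by node, and verify an exact algebraic cancellation: the $\sigma^2$ terms vanish because $\omega_m=\omega_l+\omega_r$, and the squared-bias terms vanish because $n_m\bar{y}_m = n_l\bar{y}_l + n_r\bar{y}_r$. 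Your organization buys two things. First, you obtain the stronger conclusion $\Delta(\theta^*_m) + E[\Delta'(\theta^*_m)\mid\text{train}] = 0$, an exact identity given the training data, of which the stated lemma is an immediate corollary. Second, your conditioning step is precisely what is needed to treat the cross terms rigorously: the paper's two displayed facts (equality of expected test means and of expected test second moments across the three nodes) do not by themselves yield $E[\omega_m\bar{y}'_m\bar{y}_m - \omega_l\bar{y}'_l\bar{y}_l - \omega_r\bar{y}'_r\bar{y}_r]=0$, because these are products of test and training quantities; one needs the argument you make explicit, namely that conditionally on the training data each test mean has expectation $\mu$, so the cross term collapses to $\mu(\omega_m\bar{y}_m-\omega_l\bar{y}_l-\omega_r\bar{y}_r)=0$. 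Your final remark about a child node receiving no test observations is also a genuine boundary case that the paper passes over silently; with out-of-bag samples and deep trees it occurs with positive probability, so fixing a convention (excluding such splits or defining $H'$ of an empty node to be zero) is the right thing to do, and it does not disturb the expectation argument.
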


\begin{proof}
See Appendix \ref{proof}.
\end{proof}
\begin{theorem}
In regression settings, for a given feature $X_j$, if $y$ is independent of $X_j$ in every hyper-rectangle subset of the feature space, then we always have
$$
E\text{VI}^{\text{T,R}}_j = 0.
$$
\end{theorem}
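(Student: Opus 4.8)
The plan is to mirror the classification theorem and reduce the global statement to the single-split result of Lemma \ref{l_r} by linearity of expectation. By the definition in Equation (\ref{utr}), the per-tree importance $\text{VI}^{\text{T,R}}_j = \sum_{m,\, j \in \theta^*_m}(\Delta(\theta^*_m) + \Delta'(\theta^*_m))$ is a sum, over exactly those interior nodes $m$ whose chosen split uses feature $j$, of the combined train-plus-test increment $\Delta(\theta^*_m) + \Delta'(\theta^*_m)$. Lemma \ref{l_r} already supplies $E(\Delta(\theta^*_m) + \Delta'(\theta^*_m)) = 0$ whenever $y$ is marginally independent of $X_j$ within the region defining node $m$. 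So if I can apply the lemma at every relevant node and pass the expectation through the sum, I obtain $E\text{VI}^{\text{T,R}}_j = 0$ directly.

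The second step is to verify that the lemma's hypothesis holds at every node. In a CART-style tree the splits are axis aligned, so the region associated with any node $m$ is the intersection of the half-spaces $\{x_k \le s\}$ or $\{x_k > s\}$ inherited from its ancestors, and this intersection is a hyper-rectangle. The theorem assumes $y$ is independent of $X_j$ in \emph{every} hyper-rectangle subset of the feature space, so in particular the marginal-independence condition of Lemma \ref{l_r} is met inside whatever region node $m$ happens to be, for each $m$ with $j \in \theta^*_m$. This is precisely why the hypothesis is phrased over all hyper-rectangles rather than as a single global independence: it must cover every region the recursive partitioning could conceivably produce.

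The delicate point --- and the step I expect to be the main obstacle --- is that the collection of nodes splitting on $j$, and the node regions themselves, are \emph{random}: they are determined by the same training data used to form $\Delta(\theta^*_m)$ and, through the fitted values $\bar y_m$, to form $\Delta'(\theta^*_m)$. One therefore cannot naively treat the sum in (\ref{utr}) as a sum over a fixed index set. The clean way to handle this is to condition on the realized tree topology (equivalently, on the partition of the feature space into its hyper-rectangular nodes together with the split assignments), apply Lemma \ref{l_r} within each now-fixed region, and then remove the conditioning by the tower property. The subtlety is that the conditioning event is itself defined through $y$, so one must argue that it does not disturb the within-region independence of $y$ and $X_j$ --- which is exactly the guarantee the ``every hyper-rectangle'' assumption is designed to provide. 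Once this interchange of expectation and summation is justified, linearity finishes the argument immediately, just as in the classification case, and the extension to Random Forests follows by averaging over base learners via Equation (\ref{si}).
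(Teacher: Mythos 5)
Your proposal takes essentially the same route as the paper: the paper's proof (given explicitly only for the classification analogue, and implicitly for regression) is exactly the reduction of $E\text{VI}^{\text{T,R}}_j = 0$ to Lemma \ref{l_r} applied at each node whose split uses $X_j$, combined with linearity of expectation through the sum in Equation (\ref{utr}), with the ``every hyper-rectangle'' hypothesis guaranteeing the lemma applies at each (hyper-rectangular) node region. If anything, you are more careful than the paper, which never explicitly addresses the data-dependence of the node regions and of the index set of the sum; your conditioning-on-the-tree-topology and tower-property argument is the natural way to make that step rigorous.
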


\section{Empirical Studies}\label{real}

\zzz{In this section, we apply our method to one simulated example and three real data sets. We compare our results to three other algorithms: the default split-improvement in \textbf{scikit-learn}, cforest \citep{hothorn2006unbiased} in R package \textbf{party} and bias-corrected impurity \citep{nembrini2018revival} in R package \textbf{ranger}. We did not include comparison with \citet{li2019debiased} since their method does not enjoy the unbiased property. In what follows, we use shorthand SI for the default split-improvement, UFI for our method (unbiased feature importance).}

\subsection{Simulated Data}

\zzz{The data has 1000 samples and 10 features, where $X_i$ takes values in $0, 1, 2, \ldots, i - 1$ with uniform probability for $1 \leq i \leq 10$. Here, we assume only $X_1$ contains true signal and all remaining nine features are noisy features. The target value $y$ is generated as follows:
\begin{itemize}
    \item Regression: $y = X_1 + 5 \epsilon$, where $\epsilon \sim \mathcal{N}(0, 1)$.
    \item Classification: $P(y = 1 | X) = 0.55$ if $X_1 = 1$, and $P(y = 1 | X) = 0.45$ if $X_1 = 0$.
\end{itemize}
Note that this task is designed to be extremely hard by choosing the binary feature as informative, and adding large noise (regression) or setting the signal strength low (classification). To evaluate the results, we look at the ranking of all features based on importance scores. Ideally $X_1$ should be ranked $1^{st}$ as it is the only informative feature. Table \ref{table:sum} shows the average ranking of feature $X_1$ across 100 repetitions. The best result of each column is marked in \textbf{bold}. Here we also compare the effect of tree depth by constructing shallow trees (with tree depth 3) and deep trees (with tree depth 10). Since cforest does not provide a parameter for directly controlling tree depth, we change the values of mincriterion as an alternative. }

\begin{table}[h]
\centering
\begin{tabular}{|c|c|c|c|c|}
\hline
\multirow{2}{*}{} & \multicolumn{2}{c|}{Tree depth = 3} & \multicolumn{2}{c|}{Tree depth = 10} \\ \cline{2-5} 
                  & R                & C                & R                 & C                \\ \hline
SI                & 3.71             & 4.10             & 10.00             & 10.00            \\ \hline
UFI               & \textbf{1.47}    & 1.39             & \textbf{1.55}     & \textbf{1.69}    \\ \hline
cforest           & 1.57             & \textbf{1.32}    & 1.77              & 1.88             \\ \hline
ranger            & 1.54             & 1.64             & 2.46              & 1.93             \\ \hline
\end{tabular}
\caption{Average importance ranking of informative feature $X_1$. R stands for regression and C for classification. The result averages over 100 repetitions. Lower values indicate better abilities in identifying informative features. In cforest, we set mincriterion to be 2.33 (0.99 percentile of normal distribution) for shallow trees and 1.28 (0.9 percentile) for deep trees.}
\label{table:sum}
\end{table}

\zzz{We can see that our method UFI achieves the best results in three situations except the classification case for shallow trees, where it is only slightly worse than cforest. Another interesting observation is that deeper trees tend to make the task of identifying informative features harder when there are noisy ones, since it is more likely to split on noisy features for splits deep down in the tree. This effect is most obvious for the default split-improvement, where it performs the worst especially for deep trees: the informative feature $X_1$ is consistently ranked as the least important ($10^{th}$ place). UFI does not seem to be affected too much from tree depth.}

\subsection{RNA Sequence Data}
The first data set examined is the prediction of C-to-U edited sites in plant mitochondrial RNA. This task was studied statistically in \citet{cummings2004simple}, where the authors applied Random Forests and used the original split-improvement as feature importance. Later, \citet{strobl2007bias} demonstrated the performance of cforest on this data set. 

RNA editing is a molecular process whereby an RNA sequence is
modified from the sequence corresponding to the DNA
template.  In the mitochondria of land plants, some cytidines
are converted to uridines before translation \citep{cummings2004simple}. 

We use the \textit{Arabidopsis thaliana} data file\footnote{The data set can be downloaded from \href{https://bmcbioinformatics.biomedcentral.com/articles/10.1186/1471-2105-5-132}{https://bmcbioinformatics.biomedcentral.com/articles/10.1186/1471-2105-5-132}.} as in \citet{strobl2007bias}. The features are based on the nucleotides surrounding the edited/non-edited sites and on the estimated folding energies of those regions. After removing missing values and one column which will not be used, the data file consists of 876 rows and 45 columns:
\begin{itemize}
    \item the response (binary).
    \item 41 nucleotides at positions -20 to 20 relative to the
edited site (categorical, one of A, T, C or G).
    \item the codon position (also 4 categories).
    \item two continuous variables based on on the estimated folding energies.
\end{itemize} 

For implementation, we create dummy variables for all categorical features, and build forest using 100 base trees. The maximum tree depth for this data set is not restricted as the number of potential predictors is large. We take the sum of importance across all dummy variables corresponding to a specific feature for final importance scores. All default parameters are used unless otherwise specified.

The results are shown in Figure \ref{fig:ctou}. Red error bars depict one standard deviation when the experiments are repeated 100 times. From the default split-improvement (Figure \ref{fig:si_ctou}), we can see that except several apparently dominant predictors (nucleotides at position -1 and 1, and two continuous features \textit{fe} and \textit{dfe}), the importance for the remaining nearly 40 features are indistinguishable. The feature importance scores given by UFI (Figure \ref{fig:ufi_ctou}) and cforest (Figure \ref{fig:cf_ctou}) are very similar. Compared with SI, although all methods agree on top three features being the nucleotides at position -1 and 1, and the continuous one \textit{fe}, there are some noticeable differences. Another continuous feature \textit{dfe} is originally ranked at the fourth place in Figure \ref{fig:si_ctou}, but its importance scores are much lower by UFI and cforest. The result given by ranger (Figure \ref{fig:ranger_ctou}) is slightly different from UFI and cforest, where it seems to have more features with importance scores larger than 0. In general, we see a large portion of predictors with feature importance close to 0 for three improved methods, which makes subsequent tasks like feature screening easier.

\begin{figure}
    \centering
    \begin{subfigure}[b]{0.495\textwidth}
        \centering
        \includegraphics[width=\textwidth]{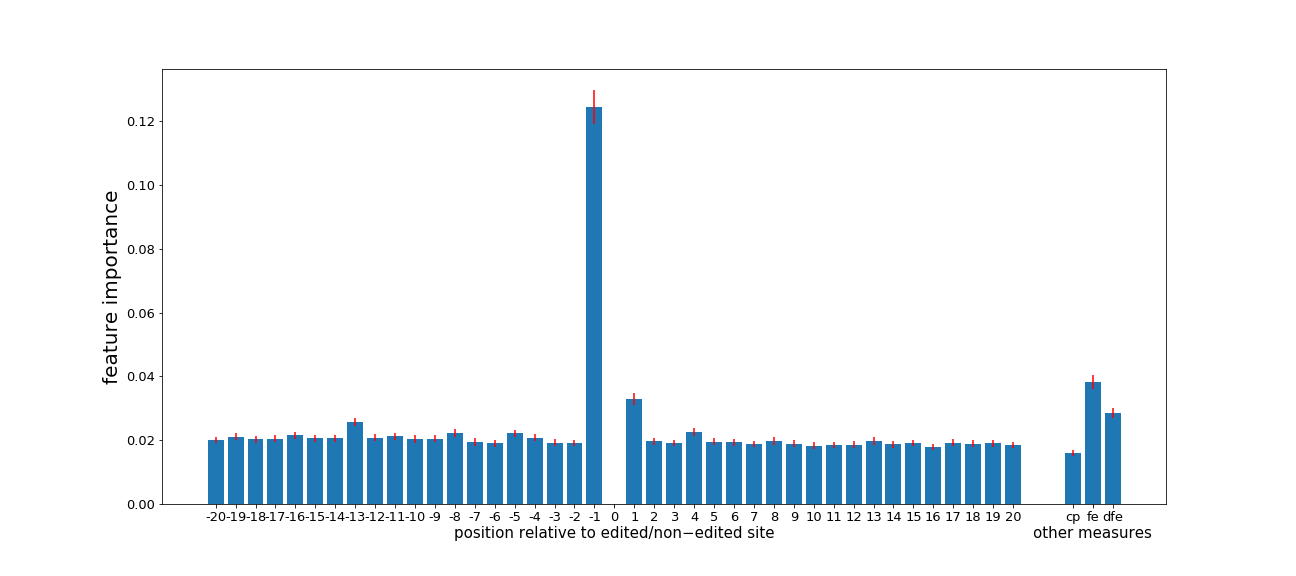}
        \caption{SI}
        \label{fig:si_ctou}
    \end{subfigure}
    \hfill
    \begin{subfigure}[b]{0.495\textwidth}
        \centering
        \includegraphics[width=\textwidth]{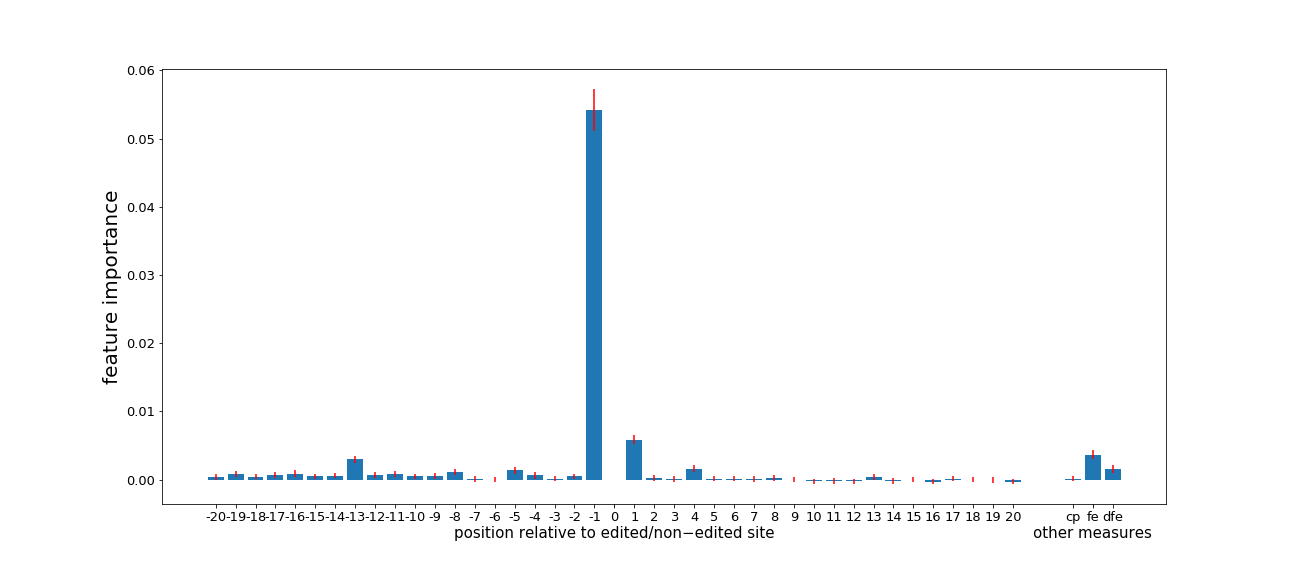}
        \caption{UFI}
        \label{fig:ufi_ctou}
    \end{subfigure}
    \hfill

    \begin{subfigure}[b]{0.495\textwidth}
        \centering
        \includegraphics[width=\textwidth]{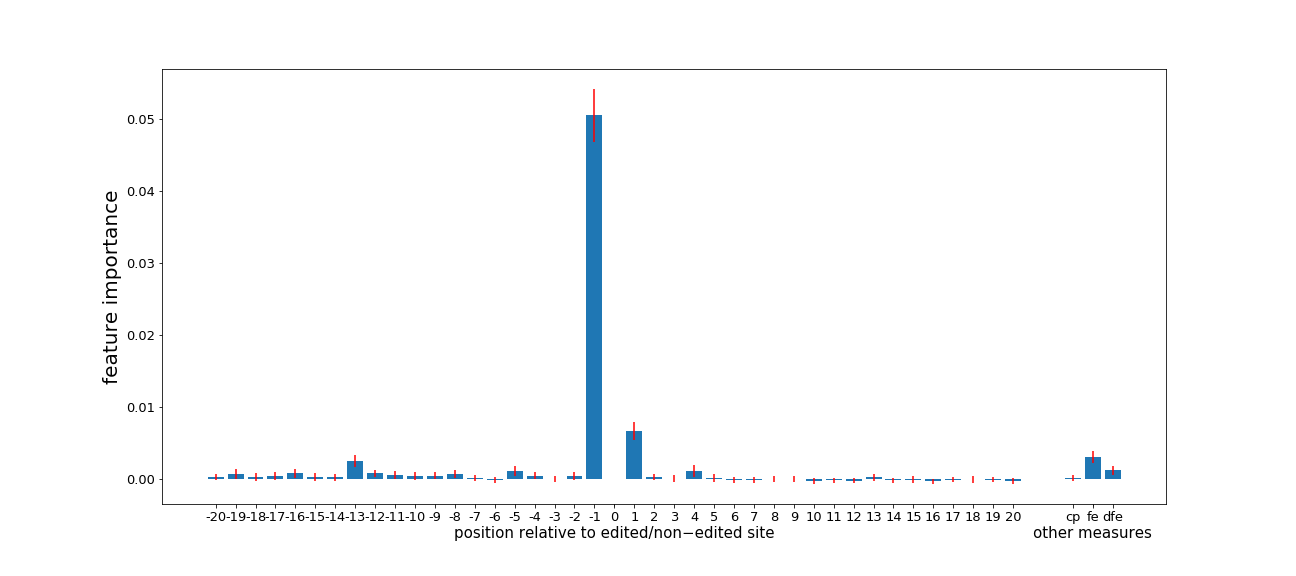}
        \caption{cforest}
        \label{fig:cf_ctou}
    \end{subfigure}
    \hfill
    \begin{subfigure}[b]{0.495\textwidth}
        \centering
        \includegraphics[width=\textwidth]{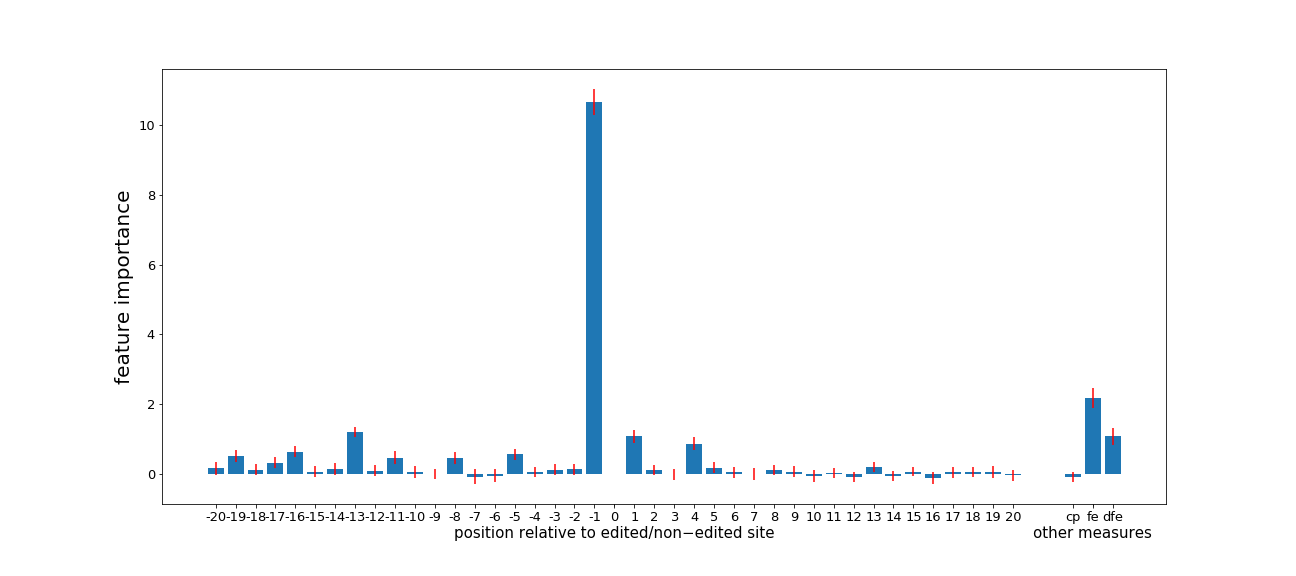}
        \caption{ranger}
        \label{fig:ranger_ctou}
    \end{subfigure}
    
    \caption{Feature importance for RNA sequence data. 100 trees are built in the forest. Red error bars depict one standard deviation when the experiments are repeated 100 times.}
    \label{fig:ctou}
    
\end{figure}

\subsection{Adult Data}

As a second example, we will use the Adult Data Set from UCI Machine Learning Repository\footnote{\href{https://archive.ics.uci.edu/ml/datasets/adult}{https://archive.ics.uci.edu/ml/datasets/adult}}. The task is to predict whether income exceeds \$50K/yr based on census data. We remove all entries including missing values, and only focus on people from Unites States. In total, there are 27504 training samples and Table \ref{table:adult} describes relevant feature information. Notice that we add a standard normal random variable, which is shown in the last row. We randomly sample 5000 entries for training. 

\begin{table}[]
\centering
\begin{tabular}{|c|c|}
\hline
Attribute      & Description      \\ \hline
age            & continuous       \\ \hline
workclass      & categorical (7)  \\ \hline
fnlwgt         & continuous       \\ \hline
education      & categorical (16) \\ \hline
education-num  & continuous       \\ \hline
marital-status & categorical (7)  \\ \hline
occupation     & categorical (14) \\ \hline
relationship   & categorical (6)  \\ \hline
race           & categorical (5)  \\ \hline
sex            & binary           \\ \hline
capital-gain   & continuous       \\ \hline
capital-loss   & continuous       \\ \hline
hours-per-week & continuous       \\ \hline
random         & continuous       \\ \hline
\end{tabular}
\caption{Attribute description for adult data set.}
\label{table:adult}
\end{table}

The results are shown in Figure \ref{fig:adult}. UFI (\ref{fig:ufi_adult}), cforest(\ref{fig:cf_adult}) and ranger (\ref{fig:ranger_adult}) display similar feature rankings which are quite different from the original split-improvement (\ref{fig:si_adult}). Notice the random normal feature we added (marked in black) is actually ranked the third most important in \ref{fig:si_adult}. This is not surprising as most of the features are categorical, and even for some continuous features, a large portion of the values are actually 0 (such as \textit{capital-gain} and \textit{capital-loss}). For UFI, cforest and ranger, the random feature is assigned an importance score close to 0. Another feature with big discrepancy is \textit{fnlwgt}, which is ranked among top three originally but is the least important for other methods. \textit{fnlwgt} represents final weight, the number of units in the target population that the responding unit represents. Thus it is unlikely to have strong predictive power for the response. For this reason, some analyses deleted this predictor before fitting models\footnote{\href{http://scg.sdsu.edu/dataset-adult\_r/}{http://scg.sdsu.edu/dataset-adult\_r/}}.

\begin{figure}
    \centering
    \begin{subfigure}[b]{0.49\textwidth}
        \centering
        \includegraphics[width=\textwidth]{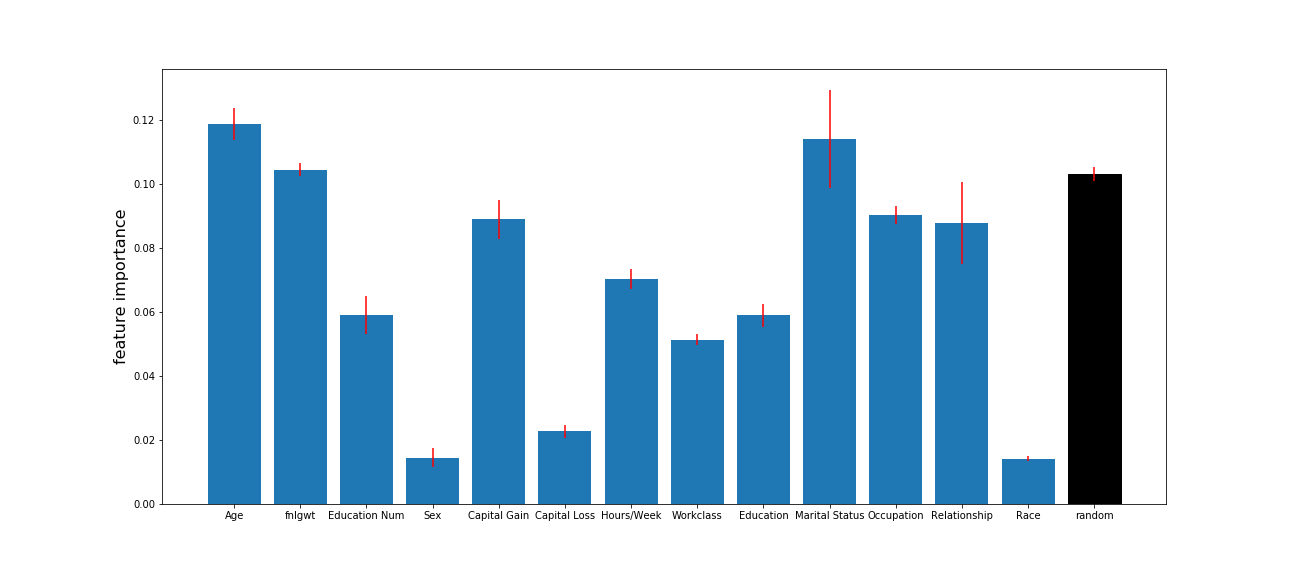}
        \caption{SI}
        \label{fig:si_adult}
    \end{subfigure}
    \hfill
    \begin{subfigure}[b]{0.49\textwidth}
        \centering
        \includegraphics[width=\textwidth]{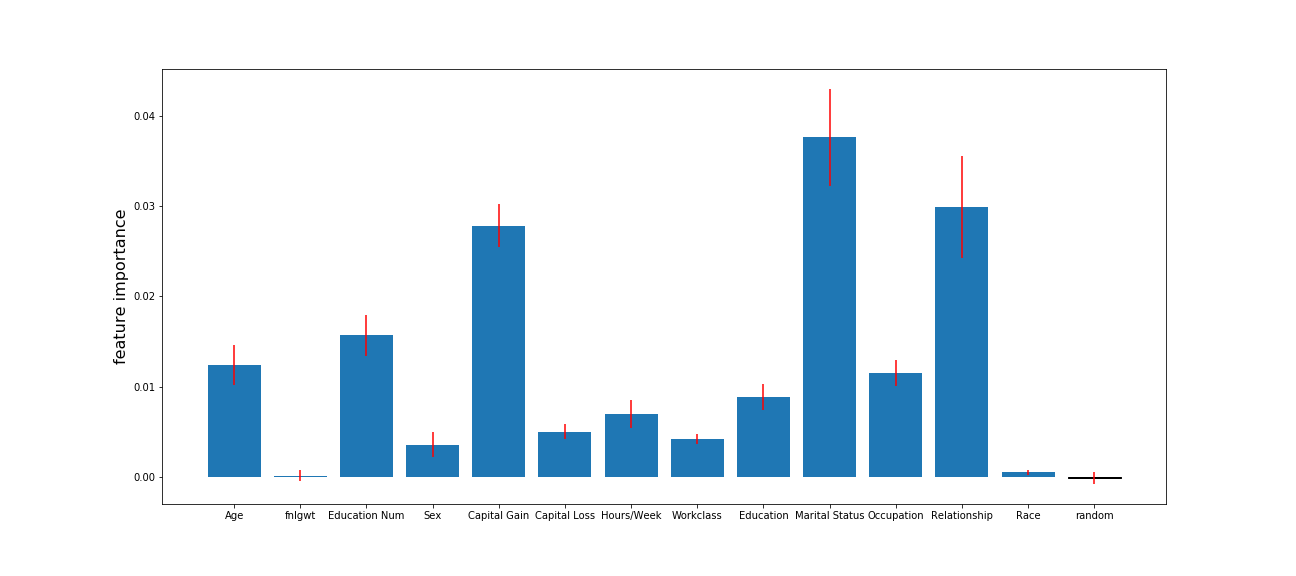}
        \caption{UFI}
        \label{fig:ufi_adult}
    \end{subfigure}
    \hfill
    
    \begin{subfigure}[b]{0.49\textwidth}
    \centering
    \includegraphics[width=\textwidth]{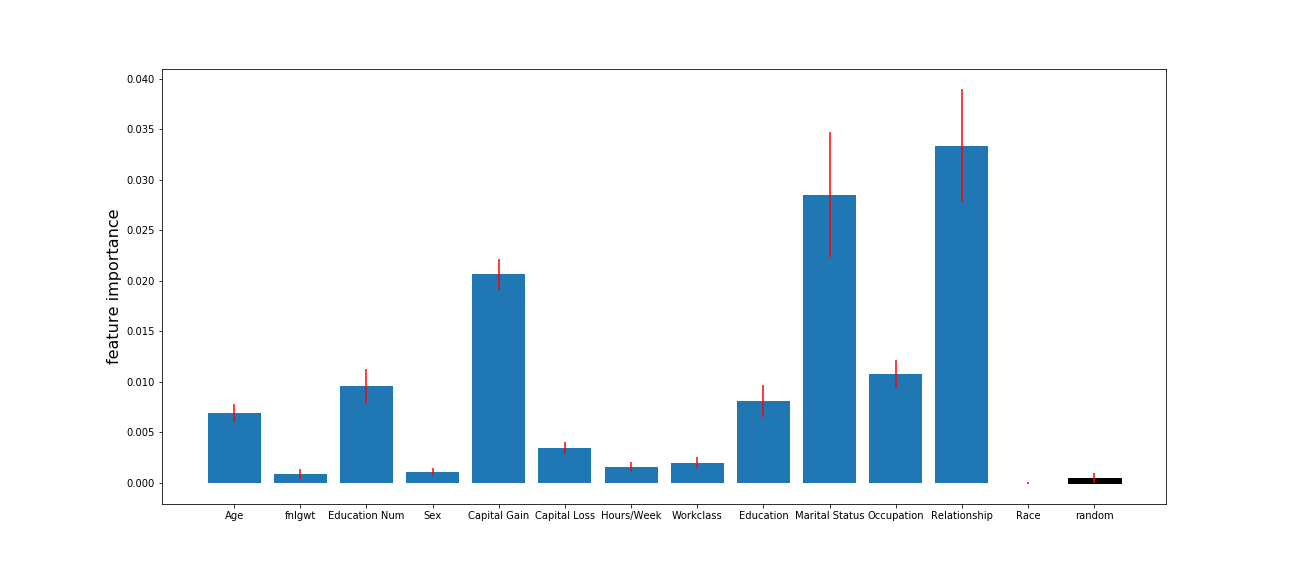}
    \caption{cforest}
    \label{fig:cf_adult}
    \end{subfigure}
    \hfill
    \begin{subfigure}[b]{0.49\textwidth}
        \centering
        \includegraphics[width=\textwidth]{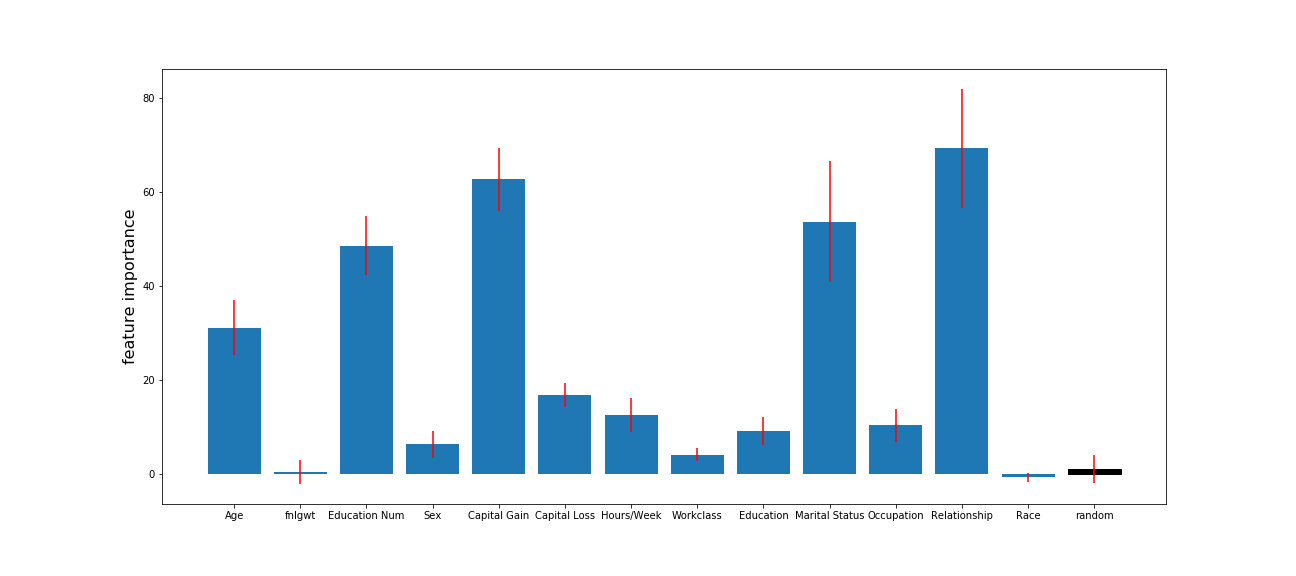}
        \caption{ranger}
        \label{fig:ranger_adult}
    \end{subfigure}
    
    \caption{Feature importance for adult data. 20 trees are built in the forest. Red error bars depict one standard deviation when the experiments are repeated 100 times.}
    \label{fig:adult}
\end{figure}

\subsection{Boston Housing Data}

\zzz{We also conduct analyses on a regression example using the Boston Housing Data\footnote{\href{https://archive.ics.uci.edu/ml/machine-learning-databases/housing/}{https://archive.ics.uci.edu/ml/machine-learning-databases/housing/}}, which has been widely studied in previous literature \citep{bollinger1981book, quinlan1993combining}. The data set contains 12 continuous, one ordinal and one binary features and the target is median value of owner-occupied homes in \$1000's. We add a random feature distributed as $\mathcal{N}(0, 1)$ as well.}

\begin{figure}[h]
    \centering
    \begin{subfigure}[b]{0.49\textwidth}
        \centering
        \includegraphics[width=\textwidth]{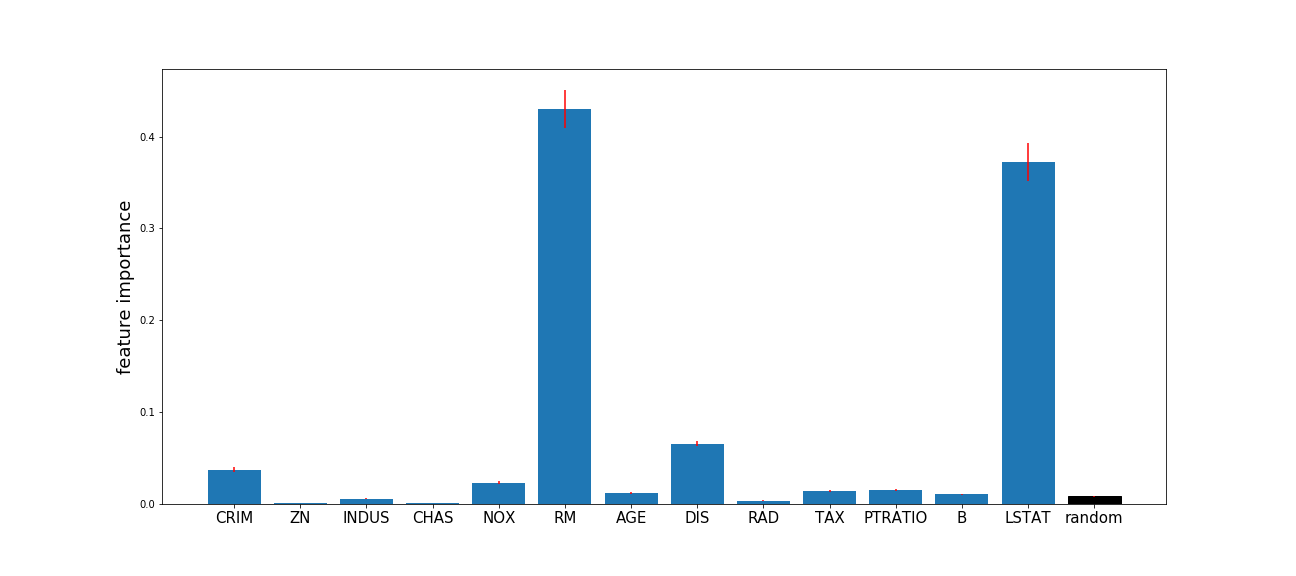}
        \caption{SI}
        \label{fig:si_boston}
    \end{subfigure}
    \hfill
    \begin{subfigure}[b]{0.49\textwidth}
        \centering
        \includegraphics[width=\textwidth]{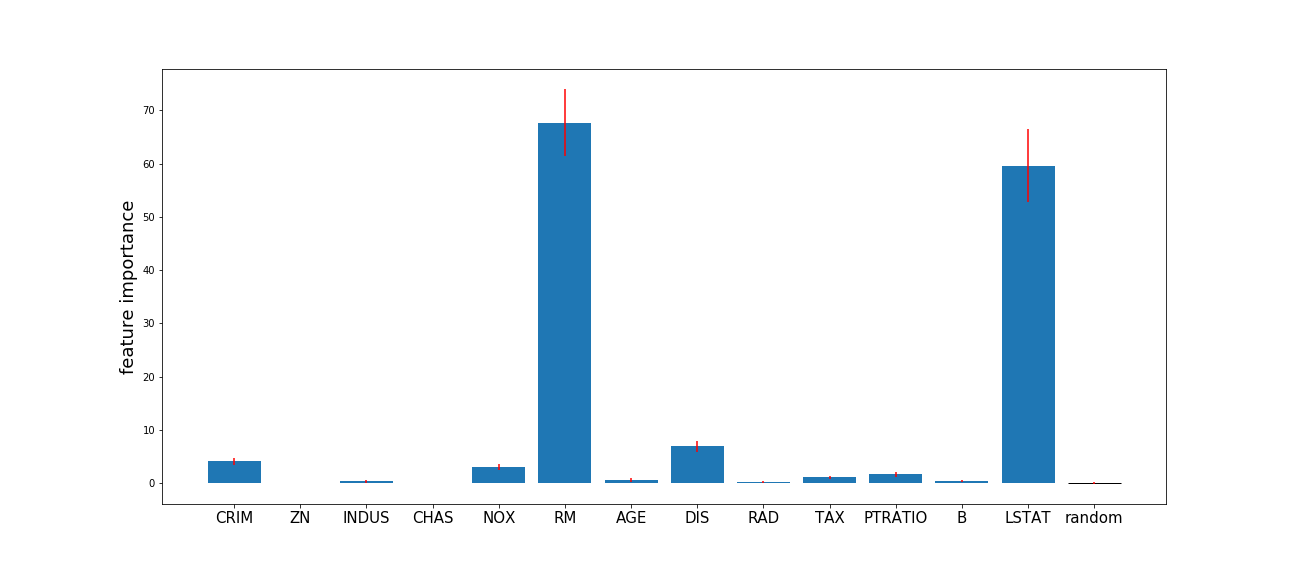}
        \caption{UFI}
        \label{fig:ufi_boston}
    \end{subfigure}
    \hfill
    
    \begin{subfigure}[b]{0.49\textwidth}
    \centering
    \includegraphics[width=\textwidth]{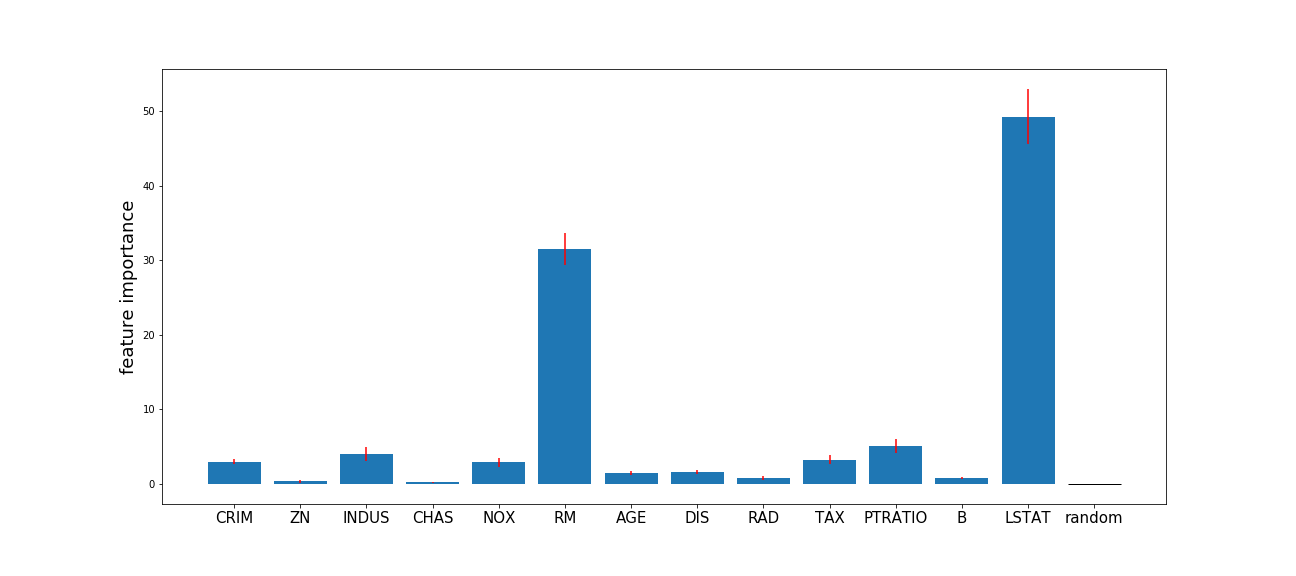}
    \caption{cforest}
    \label{fig:cf_boston}
    \end{subfigure}
    \hfill
    \begin{subfigure}[b]{0.49\textwidth}
        \centering
        \includegraphics[width=\textwidth]{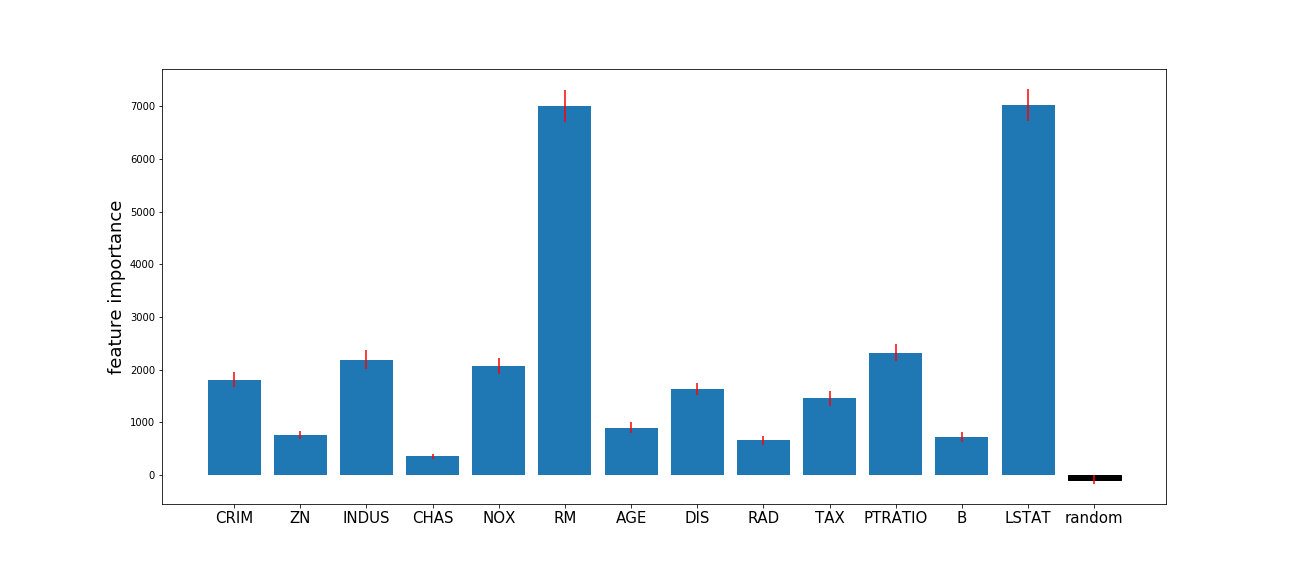}
        \caption{ranger}
        \label{fig:ranger_boston}
    \end{subfigure}
    
    \caption{Feature importance for Boston housing data. 100 trees are built in the forest. Red error bars depict one standard deviation when the experiments are repeated 100 times.}
    \label{fig:boston}
\end{figure}

\zzz{All four methods agree on two most important features: RM (average number of rooms per dwelling) and LSTAT (\% lower status of the population). In SI, the random feature still appears to be more important than several other features such as INDUS (proportion of non-retail business acres per town) and RAD (index of accessibility to radial highways), though the spurious effect is much less compared to Figure \ref{fig:si_adult}. As expected, the importance of random feature is close to zero in UFI. In this example, the SI did not seem to provide misleading result as most of the features are continuous, and the only binary feature CHAS (Charles River dummy variable) turns out to be not important. }

\subsection{Summary}

\zzz{Our empirical studies confirm that the default split-improvement method is biased towards increasing the importance of features with more potential splits. The bias is more severe in deeper trees. Compared to three other approaches, our proposed method performs the best in a difficult task to identify the only important feature from 10 noisy features. For real world data sets, though we do not have a ground truth for feature importance scores, our method gives similar and meaningful outputs as two state-of-the-art methods cforest and ranger.}

\section{Discussions}\label{dis}

Tree-based methods are widely employed in many applications. One of the many advantages is that these models come naturally with feature importance measures, which practitioners rely on heavily for subsequent analysis such as feature ranking or screening. It is important that these measurements are trustworthy.

We show empirically that split-improvement, as a popular measurement of feature importance in tree-based models, is biased towards continuous features, or categorical features with more categories. This phenomenon is akin to overfitting in training any machine learning model. We propose a simple fix to this problem and demonstrate its effectiveness both theoretically and empirically. Though our examples are based on Random Forests, the adjustment can be easily extended to any other tree-based model. 

The original version of split-improvement is the default and only feature importance measure for Random Forests in \textbf{scikit-learn}, and is also returned as one of the measurements for \textbf{randomForest} library in R. Statistical analyses utilizing these packages will suffer from the bias discussed in this paper. Our method can be easily integrated into existing libraries, and require almost no additional computational burden.  \new{As already observed, while we have used \textit{out-of-bag} samples as a natural source of test data,  alternatives such as sample partitions -- thought of as a subsample of \textit{out-of-bag} data for our purposes -- can be used in the context of honest trees, or a held-out test set will also suffice.   The use of subsamples fits within the methods used to demonstrate the asymptotic normality of Random Forests developed in \citet{mentch2016quantifying}. This potentially allows for formal statistical tests to be developed based on the unbiased split-improvement measures proposed here.}  Similar approaches have been taken in \citet{zhou2018approximation} for designing stopping rules in approximation trees. 



However, feature importance itself is very difficult to define exactly, with the possible exception of linear models, where the magnitude of coefficients serves as a simple measure of importance. There are also considerable discussion on the subtly introduced when correlated predictors exist, see for example \cite{strobl2008conditional, gregorutti2017correlation}. We think \new{that clarifying the relationship between split-improvement and the topology of the resulting function represents an important} future research direction. 

\subsection*{Acknowledgements}
This work was supported in part by NSF grants DMS-1712554, DEB-1353039 and TRIPODS 1740882. 

\bibliographystyle{ACM-Reference-Format}
\bibliography{sample}

%
\appendix

\section{Proofs of Lemma \ref{l_c} and \ref{l_r}}\label{proof}

\begin{proof}[Proof of Lemma \ref{l_c}]
We want to show that for independent $X_{j}$ and $y$ within node $m$, $\Delta'(\theta^*_m)$ should ideally be zero when splitting on the $j^{th}$ variable. Rewriting $H'(m)$ defined in Equation (\ref{pi}) and we get:
\begin{align*}
H'(m) &= 1 - p_{m,1}p'_{m,1} - p_{m,2}p'_{m,2} \\
&= 1 - p_{m,1}p'_{m,1} - (1 - p_{m,1})(1 - p'_{m,1}) \\
&= p_{m,1} + p'_{m,1} - 2p_{m,1}p'_{m,1}.
\end{align*}
Using similar expressions for $H'(l)$, we have:
$$
H'(m) - H'(l) = (p_{m,1} + p'_{m,1} - 2p_{m,1}p'_{m,1}) - (p_{l,1} + p'_{l,1} - 2p_{l,1}p'_{l,1}).
$$
Given that the test data is independent of the training data and the independence between $X_{j}$ and $y$, then in expectation, we should have $E(p'_{m,1}) = E(p'_{l,1}) = p'_1$. Thus,
\begin{align*}
E(H'(m) - H'(l)) &= (E(p_{m,1}) + E(p'_{m,1}) - 2E(p_{m,1}p'_{m,1})) - (E(p_{l,1}) + E(p'_{l,1}) - 2E(p_{l,1}p'_{l,1})) \\
&= (E(p_{m,1}) + E(p'_{m,1}) - 2E(p_{m,1})E(p'_{m,1})) - (E(p_{l,1}) + E(p'_{l,1}) - 2E(p_{l,1})E(p'_{l,1})) \\
&= (E(p_{m,1}) + p'_1 - 2E(p_{m,1})p'_1) - (E(p_{l,1}) + p'_1 - 2E(p_{l,1})p'_1) \\
&= (E(p_{m,1}) - E(p_{l,1}))(1 - 2p'_1).
\end{align*}
Similarly,
$$
E(H'(m) - H'(r)) = (E(p_{m,1}) - E(p_{r,1}))(1 - 2p'_1).
$$
Combined together into Equation (\ref{delta_c}), 
\begin{align*}
E(\Delta'(\theta^*_m)) &=\omega_l(H'(m) - H'(l)) + \omega_r(H'(m) - H'(r)) \\
&= \omega_l(E(p_{m,1}) - E(p_{l,1}))(1 - 2p'_1) + \omega_r(E(p_{m,1}) - E(p_{r,1}))(1 - 2p'_1) \\
&= (1 - 2p'_1)(\omega_mE(p_{m,1}) - \omega_lE(p_{l,1}) - \omega_rE(p_{r,1})) \\
&= (1 - 2p'_1) \times 0\\
&= 0,
\end{align*}
since we always have
$$
\omega_m \times p_{m,1} = \omega_l \times p_{l,1} + \omega_r \times p_{r,1}.
$$
\end{proof}

\begin{proof}[Proof of Lemma \ref{l_r}]
Rewriting the expression of $H(m)$:
\begin{align*}
H(m) &= \frac{1}{n_m}\sum_{i=1}^{n_m}(y_{m,i} - \bar{y}_m)^2 \\
&= \frac{1}{n_m}(\sum_{i=1}^{n_m}y_{m,i}^2 - n_m\bar{y}_m^2).
\end{align*}
Thus, 
\begin{align*}
\Delta(\theta^*_m) &=  \omega_mH(m) - (\omega_lH(l) + \omega_rH(r)) \\
&= \omega_m\frac{1}{n_m}\sum_{i=1}^{n_m}(y_{m,i}^2 - n_m\bar{y}_m^2) - (\omega_l\frac{1}{n_l}\sum_{i=1}^{n_l}(y_{l,i}^2 - n_l\bar{y}_l^2) + \omega_r\frac{1}{n_r}\sum_{i=1}^{n_r}(y_{r,i}^2 - n_r\bar{y}_r^2)) \\
&= \frac{1}{n}\sum_{i=1}^{n_m}(y_{m,i}^2 - n_m\bar{y}_m^2) - (\frac{1}{n}\sum_{i=1}^{n_l}(y_{l,i}^2 - n_l\bar{y}_l^2) + \frac{1}{n}\sum_{i=1}^{n_r}(y_{r,i}^2 - n_r\bar{y}_r^2)) \\
&= \frac{1}{n}(\sum_{i=1}^{n_m}(y_{m,i}^2 - n_m\bar{y}_m^2) - \sum_{i=1}^{n_l}(y_{l,i}^2 - n_l\bar{y}_l^2) - \sum_{i=1}^{n_r}(y_{r,i}^2 - n_r\bar{y}_r^2)) \\
&= \frac{1}{n}(\sum_{i=1}^{n_m}y_{m,i}^2 - \sum_{i=1}^{n_l}y_{l,i}^2 - \sum_{i=1}^{n_r}y_{r,i}^2) - \frac{1}{n}(n_m\bar{y}_m^2 - n_l\bar{y}_l^2 - n_r\bar{y}_r^2) \\
&= \frac{1}{n}(n_l\bar{y}_l^2 + n_r\bar{y}_r^2 - n_m\bar{y}_m^2) \\
&= \omega_l\bar{y}_l^2 + \omega_r\bar{y}_r^2 - \omega_m\bar{y}_m^2.
\end{align*}
By Cauchy–Schwarz inequality, 
$$
(n_l\bar{y}_l^2 + n_r\bar{y}_r^2)(n_l + n_r) \geq (n_l\bar{y}_l + n_r\bar{y}_r)^2 = (n_m\bar{y}_m)^2,
$$
thus
$$
\Delta(\theta^*_m) = \frac{1}{n}(n_l\bar{y}_l^2 + n_r\bar{y}_r^2 - n_m\bar{y}_m^2) \geq 0
$$
unless $\bar{y}_l = \bar{y}_r = \bar{y}_m$.

Similarly for $H'(m)$:
\begin{align*}
H'(m) &= \frac{1}{n'_m}\sum_{i=1}^{n'_m}(y'_{m,i} - \bar{y}_m)^2 \\
&= \frac{1}{n'_m}\sum_{i=1}^{n'_m}y'^2_{m,i} - 2\frac{1}{n'_m}\sum_{i=1}^{n'_m}y'_{m,i}\bar{y}_m + \frac{1}{n'_m}\sum_{i=1}^{n'_m}\bar{y}_m^2 \\
&= \frac{1}{n'_m}\sum_{i=1}^{n'_m}y'^2_{m,i} - 2\bar{y}'_m\bar{y}_m + \bar{y}_m^2
\end{align*}
and thus
\begin{align*}
\Delta'(\theta^*_m) &=  \omega_mH'(m) - (\omega_lH'(l) + \omega_rH'(r)) \\
&= \omega_m(\frac{1}{n_m'}\sum_{i=1}^{n'_m}y'^2_{m,i} - 2\bar{y}'_m\bar{y}_m + \bar{y}_m^2) - \omega_l(\frac{1}{n'_l}\sum_{i=1}^{n'_l}y'^2_{l,i} - 2\bar{y}'_l\bar{y}_l + \bar{y}_l^2) - \omega_r(\frac{1}{n'_r}\sum_{i=1}^{n'_r}y'^2_{r,i} - 2\bar{y}'_r\bar{y}_r + \bar{y}_r^2) \\
&= (\omega_m\frac{1}{n_m'}\sum_{i=1}^{n'_m}y'^2_{m,i} - \omega_l\frac{1}{n_l'}\sum_{i=1}^{n'_l}y'^2_{l,i} - \omega_r\frac{1}{n_r'}\sum_{i=1}^{n'_r}y'^2_{r,i}) + (\omega_m\bar{y}_m^2 - \omega_l\bar{y}_l^2 - \omega_r\bar{y}_r^2) - 2(\omega_m\bar{y}'_m\bar{y}_m - \omega_l\bar{y}'_l\bar{y}_l - \omega_r\bar{y}'_r\bar{y}_r) \\
&= (\omega_m\frac{1}{n_m'}\sum_{i=1}^{n'_m}y'^2_{m,i} - \omega_l\frac{1}{n_l'}\sum_{i=1}^{n'_l}y'^2_{l,i} - \omega_r\frac{1}{n_r'}\sum_{i=1}^{n'_r}y'^2_{r,i}) - \Delta(\theta^*_m) - 2(\omega_m\bar{y}'_m\bar{y}_m - \omega_l\bar{y}'_l\bar{y}_l - \omega_r\bar{y}'_r\bar{y}_r).
\end{align*}

By the independence assumptions, we have 
$$
E\frac{1}{n_m'}\sum_{i=1}^{n'_m}y'^2_{m,i} = E \frac{1}{n_l'}\sum_{i=1}^{n'_l}y'^2_{l,i} = E \frac{1}{n_r'}\sum_{i=1}^{n'_r}y'^2_{r,i},
$$
and 
$$
E \bar{y}'_m = E \bar{y}'_l = E \bar{y}'_r.
$$

We can conclude that 
$$
E(\Delta(\theta^*_m) + \Delta'(\theta^*_m)) = 0.
$$

\end{proof}

\section{Additional Simulation Results}\label{sim}

Our simulation experiments in Section \ref{FI} and \ref{USI} operate by creating dummy variables for categorical features. It would be interesting to see the results if we instead treat those as ordered discrete values. 

Figure \ref{fig:si_dis} and \ref{fig:rank_dis} show the original version of split-improvement corresponding to Figure \ref{fig:si} and \ref{fig:rank}. Similar phenomenon is again observed: it over estimates importance of continuous features and categorical features with more categories. It is worth noticing that the discrepancy between continuous and categorical features is even larger in this case. Unlike in Figure \ref{fig:si_cls}, $X_1$ is always ranked the most important. This results from the fact that by treating categorical features as ordered discrete ones, it limit the number of potential splits compared to using dummy variables. 

\begin{figure}
    \centering
    \begin{subfigure}[b]{0.45\textwidth}
        \centering
        \includegraphics[width=\textwidth]{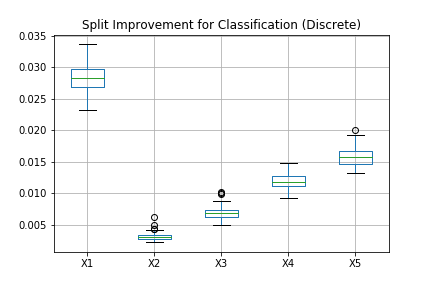}
        \caption{Classification}
        \label{fig:si_cls_dis}
    \end{subfigure}
    \hfill
    \begin{subfigure}[b]{0.45\textwidth}
        \centering
        \includegraphics[width=\textwidth]{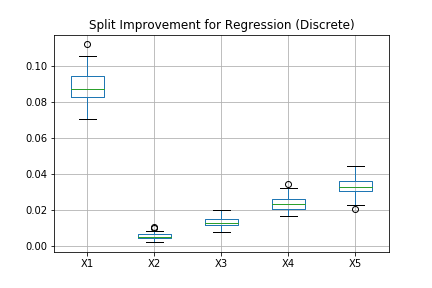}
        \caption{Regression}
        \label{fig:si_regr_dis}
    \end{subfigure}
    \hfill
    
    \caption{Split-improvement measures on five predictors, where we treat categorical features as ordered discrete values. Box plot is based on 100 repetitions. 100 trees are built in the forest and maximum depth of each tree is set to 5.}
    \label{fig:si_dis}
    
\end{figure}

\begin{figure}
    \centering
    \begin{subfigure}[b]{0.45\textwidth}
        \centering
        \includegraphics[width=\textwidth]{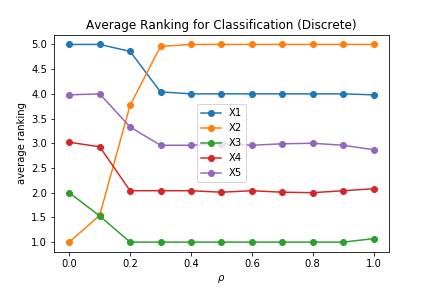}
        \caption{Classification}
        \label{fig:c_rank_dis}
    \end{subfigure}
    \hfill
    \begin{subfigure}[b]{0.45\textwidth}
        \centering
        \includegraphics[width=\textwidth]{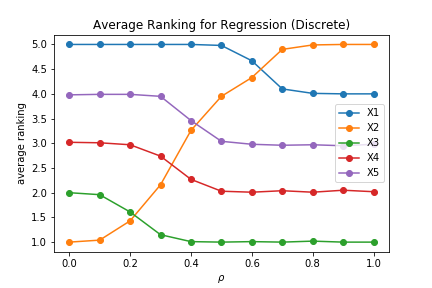}
        \caption{Regression}
        \label{fig:r_rank_dis}
    \end{subfigure}
    \hfill
    
    \caption{Average feature importance ranking across different signal strengths over 100 repetitions, where we treat categorical features as ordered discrete values. 100 trees are built in the forest and maximum depth of each tree is set to 5.}
    \label{fig:rank_dis}
    
\end{figure}

Not surprisingly, our proposed method work well in declaring all five features have no predictive power or finding the most informative one, as shown in Figure \ref{fig:test_dis} and \ref{fig:urank_dis}.

\begin{figure}
    \centering
    \begin{subfigure}[b]{0.45\textwidth}
        \centering
        \includegraphics[width=\textwidth]{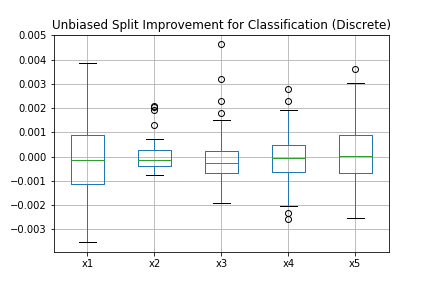}
        \caption{Classification}
        \label{fig:test_cls_dis}
    \end{subfigure}
    \hfill
    \begin{subfigure}[b]{0.45\textwidth}
        \centering
        \includegraphics[width=\textwidth]{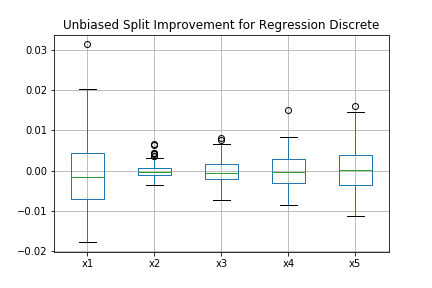}
        \caption{Regression}
        \label{fig:test_regr_dis}
    \end{subfigure}
    \hfill
    
    \caption{Unbiased split-improvement, where we treat categorical features as ordered discrete values. Box plot is based on 100 repetitions. 100 trees are built in the forest and maximum depth of each tree is set to 5. Each tree is trained using bootstrap samples and \textit{out-of-bag} samples are used as test set.}
    \label{fig:test_dis}
    
\end{figure}

\begin{figure}
    \centering
    \begin{subfigure}[b]{0.45\textwidth}
        \centering
        \includegraphics[width=\textwidth]{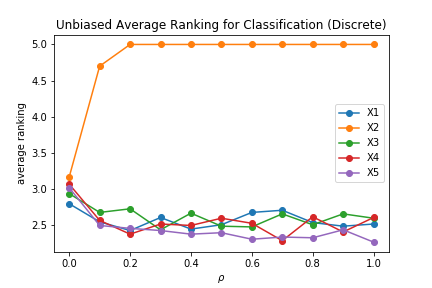}
        \caption{Classification}
        \label{fig:uc_rank_dis}
    \end{subfigure}
    \hfill
    \begin{subfigure}[b]{0.45\textwidth}
        \centering
        \includegraphics[width=\textwidth]{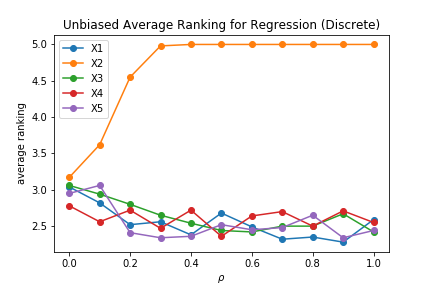}
        \caption{Regression}
        \label{fig:ur_rank_dis}
    \end{subfigure}
    \hfill
    
    \caption{Unbiased feature importance ranking across different signal strengths averaged over 100 repetitions, where we treat categorical features as ordered discrete values. 100 trees are built in the forest and maximum depth of each tree is set to 5. Each tree is trained using bootstrap samples and \textit{out-of-bag} samples are used as test set.}
    \label{fig:urank_dis}
    
\end{figure}

\end{document}